\documentclass{article}

\usepackage{microtype}
\usepackage{graphicx}
\usepackage{subcaption}
\usepackage{booktabs} 
\usepackage{array}
\usepackage{enumitem}
\usepackage{hyperref}

\usepackage[accepted]{icml2026}

\usepackage{amsmath}
\usepackage{amssymb}
\usepackage{mathtools}
\usepackage{amsthm}
\usepackage{multicol}
\usepackage{multirow}
\usepackage{algorithm}
\usepackage{algorithmic}

\usepackage[capitalize,noabbrev]{cleveref}

\theoremstyle{plain}
\newtheorem{theorem}{Theorem}[section]
\newtheorem{proposition}[theorem]{Proposition}

\theoremstyle{definition}

\theoremstyle{remark}
\newtheorem{remark}[theorem]{Remark}

\usepackage[textsize=tiny]{todonotes}
 
\icmltitlerunning{Exploring More to Solve More: Boosting Diversity in Text Diffusion Models via Entropy‑Based Guidance}

\begin{document}

\twocolumn[
  \icmltitle{Exploring More to Solve More: Boosting Diversity in \\ Text Diffusion Models via Entropy‑Based Guidance}



  \icmlsetsymbol{equal}{*}

  \begin{icmlauthorlist}
    \icmlauthor{Jingwei Zhang}{cuhk}
    \icmlauthor{Haoyu Lei}{cuhk}
    \icmlauthor{Zijin Feng}{hw}
    \icmlauthor{Jiacheng Sun}{hw}
    \icmlauthor{Farzan Farnia}{cuhk}
  \end{icmlauthorlist}

  \icmlaffiliation{cuhk}{Department of Computer Science and Engineering, The Chinese University of Hong Kong}
  \icmlaffiliation{hw}{Huawei Foundation Model Department}

  \icmlcorrespondingauthor{Jiacheng Sun}{sunjiacheng1@huawei.com}
  \icmlcorrespondingauthor{Farzan Farnia}{farnia@cse.cuhk.edu.hk}

  \icmlkeywords{Machine Learning, ICML}

  \vskip 0.3in
]



\printAffiliationsAndNotice{}  

\begin{abstract}
Although diffusion models have revolutionized continuous domains like image synthesis through high quality generations and controllable guidance mechanisms, bringing this controllability to the discrete, sequential nature of text remains an open challenge. Meanwhile, current sampling strategies and guidance methods adjust token likelihoods without capturing the broader semantic landscape, leading to a suboptimal balance between fidelity and diversity. In this work, we introduce a novel training-free Semantic-Aware Kernel Entropy (SAKE) guidance method. Our method computes the order-2 Rényi entropy over a kernel Gram matrix that captures both cross-token semantic interactions and relative token positions. By linearizing this objective in the embedding space, we derive a tractable guidance signal that dynamically adjusts the sampling distribution—flattening it to encourage exploration during redundancy and sharpening it for fidelity when diverse. Empirical experiments demonstrate that our approach achieves a superior Pareto frontier between fidelity and diversity, and improves multi-sample performance on reasoning-intensive tasks, such as code and mathematics generation, compared to temperature scaling and discrete guidance baselines.
\end{abstract}

\section{Introduction}

Text generation has long been dominated by autoregressive (AR) models \cite{brown2020language, liu2024deepseek, yang2025qwen3}, which decompose the joint probability of a sequence into a product of conditional next-token probabilities. While highly effective, AR generation is inherently constrained by serial latency and a lack of bidirectional context during the decoding process. Recently, Diffusion Language Models (DLMs) have emerged as a compelling non-autoregressive alternative, offering a paradigm shift toward parallel decoding \cite{austin2021structured, nie2025large, ye2025dream}. The core mechanism of modern DLMs is a diffusion process operating directly within the discrete categorical token space. This involves a forward corruption process that gradually transitions valid tokens into noise, paired with a generative model trained to reverse this trajectory. Unlike autoregressive approaches, this framework enables simultaneous and iterative refinement of all tokens. Consequently, DLMs leverage global bidirectional context throughout generation and enable flexible trade-offs between computational cost and sample quality. 

The success of continuous diffusion models in image and audio synthesis is largely driven by their controllability through inference-time guidance~\cite{dhariwal2021diffusion, ho2022classifier, bansal2023universal}. Such guidance modifies the reverse diffusion dynamics to sample from desired conditional distributions without retraining. However, extending these paradigms to the discrete, sequential domain of text generation presents fundamental challenges. In continuous spaces, guidance usually operates by modifying the score function (the gradient of the data log-likelihood) via a differentiable energy function. In contrast, in the discrete domain, gradients are undefined, and the score function is based on a categorical posterior distribution. Modifying this distribution requires re-normalization, which is computationally intractable due to the need to calculate a partition function that sums over $V^L$ possible sequences, where $L$ is the sequence length and  $V$ the vocabulary size. Although recent methods reduce this exponential cost to linear by assuming token independence \cite{sahoo2024simple}, such factorizations ignore the complex cross-token correlations that are essential for coherent semantic formulation.

Besides computational intractability, discrete text diffusion suffers from a lack of semantic structure in the logit space, in contrast to the continuous diffusion that exploits the intrinsic semantic geometry of the continuous feature space. Because discrete models estimate transition probabilities over categorical distributions, the resulting logits encode model confidence rather than semantic similarity. Consequently, existing DLM sampling and guidance strategies (e.g., temperature scaling, D-CFG \cite{sahoo2024simple}) are limited to manipulating token likelihoods without awareness of the broader semantic landscape. To induce diversity, these methods rely on non-semantic distribution flattening via high-temperature scaling rather than navigating the textual manifold. This fundamentally limits the fidelity-diversity trade-off, as diversity is obtained through random noise injection into the probability mass function rather than through structured semantic exploration.

Diversity is often viewed as a trade-off against quality, but we argue that in complex reasoning, diversity in chain-of-thoughts (CoT) is a prerequisite for robustness rather than a mere source of variety. Prior work such as Self-Consistency \cite{wang2022self} and Tree-of-Thoughts \cite{yao2023tree} shows that generating multiple distinct reasoning paths increases the probability of recovering the correct answer. Diversity prevents the model’s probability distribution from collapsing onto a single, potentially erroneous chain of thought. When exploration is restricted to a narrow cluster of high-likelihood tokens, the model risks reinforcing its initial biases, leading to premature convergence on flawed reasoning. Consistent with this, our empirical results show that diversity-guided chains of thought significantly improves complex reasoning task pass rates, validating the utility of exploration in deterministic tasks.

In this paper, we study the guidance for DLMs in the discrete, sequential setting. Given a trained diffusion language model, we guide sampling to maximize task success in a way that maintains cross-token interactions, preserving diversity and semantic validity. Our contributions are threefold.

\begin{itemize}[leftmargin=*]
    \item We propose a general computationally efficient \emph{diversity guidance} framework for discrete text diffusion. This framework overcomes the limitations of prior token-independent assumptions by integrating sequence-level semantic information. Building on this framework, we propose \emph{Semantic-Aware Kernel Entropy} (SAKE), a tractable objective that models sequence semantic diversity, and explicitly maximize R\'enyi entropy of the sequence.
    \item We demonstrate that SAKE functions as an \emph{adaptive distribution modulator}. Unlike static temperature scaling, our method dynamically adjusts the generation probability: it flattens the distribution to encourage exploration during mode collapse and sharpens it to ensure coherence when intrinsic diversity is already high.
    \item Empirical evaluations on both synthetic settings and standard LLM benchmarks confirm that our approach yields a superior trade-off between quality and diversity compared to discrete guidance baselines and temperature-based sampling. Notably, applying SAKE to LLaDA-8B-base yields significant performance gains, improving HumanEval pass@32 ($41.1 \rightarrow 55.8$), MBPP pass@32 ($48.2 \rightarrow 56.1$), and GSM8K self-consistency ($71.5 \rightarrow 75.1$).
\end{itemize}

\section{Related Works}
\subsection{Diffusion Models for Text Generation} 

Early text diffusion operated in continuous embedding spaces \cite{li2022diffusion, yuan2022seqdiffuseq}, but the performance is limited compared to AR baselines due to a challenge mapping denoised continuous vector back to a coherent discrete token. To address the embedding bottleneck, recent research has shifted toward discrete diffusion processes that operate directly on the categorical token space. Foundational work D3PM \cite{austin2021structured} formalized diffusion over discrete states using transition matrices, defining noise processes such as uniform corruption or corruption to a specific mask token (absorbing state). Building on this, modern Diffusion Language Models (DLMs) have scaled significantly \cite{lou2023discrete, sahoo2024simple, ou2024your, nie2025large, ye2025dream}, and demonstrated that DLMs can match or exceed AR models in perplexity and zero-shot generation while enabling arbitrary infilling and iterative refinement.


\vspace{-0.2cm}
\subsection{Guidance Mechanisms for Diffusion Models}
Continuous score-based diffusion models~\cite{sohl2015deep, ho2020denoising, song2020score, song2019generative} has shown remarkable abilities in various generation tasks, including images~\cite{rombach2022high, dhariwal2021diffusion, ho2022cascaded} and videos~\cite{he2022latent,blattmann2023stable,blattmann2023align}. Conditional generation is achieved by guidance, modifying the learned score function during inference~\cite{dhariwal2021diffusion, nichol2021glide, bansal2023universal, ho2022classifier, corso2024particleguidance, askari2024improving, jalali2025sparke,sani2026mmd_guidance,jalali2024conditional}. As mentioned earlier, transferring these techniques to discrete diffusion is mathematically non-trivial. To make guidance tractable, \citet{schiff2024simple} proposed Discrete Classifier-Free Guidance (D-CFG) and Discrete Classifier-Based Guidance (D-CBG), which rely on an independence assumption that assumes token transitions are independent. Although this reduces the normalization cost from exponential to linear with sequence length, the independence assumption ignores the cross-token correlations that are vital for semantic formulation.

\vspace{-0.2cm}
\subsection{Diversity in Generation Tasks}
Beyond generation quality, diversity has increasingly become a critical value within the research community. Various metrics have been proposed to quantify diversity in both image~\cite{sajjadi2018assessing, jalali2023information,ospanov2025towards,ospanov2024statvendi,Ospanov_2025_ICCV} and text~\cite{zhu2025measuring} domains. In auto-regressive text generation, several methods extending beyond temperature scaling have been developed to enhance output variety~\cite{vijayakumar2016diverse, holtzman2019curious, nguyen2024turning}. Also, the entropy-based novelty of sample generation has been studied in \citep{zhang2024interpretable,zhang2025finc,lotfian2026promptsplit}, and the related works \citep{hu2025multiarmedbandit,rezaei2024be,hu2025online,jafari2026dakucb,nia2026mixturegreedy} study online diversity-aware evaluation of generative models. The role and comparison of embeddings for diversity evaluation has been analyzed in \citep{stein2023exposing,jalali2025spec,wu2025fusing,gong2025kernel,wu2026maximum_isit,wu2026koda}. Meanwhile, diffusion models have demonstrated remarkable capability in controlling diversity for image generation, due to the flexible inference mechanisms. However, while training-free guidance techniques have proven effective for continuous image diffusion~\cite{sadat2024cads, corso2024particleguidance, kirchhof2025shieldeddiffusiongeneratingnovel, jalali2025sparke}, the diversity guidance in discrete diffusion models for text generation is unexplored.

\section{Preliminaries}

\textbf{Continuous Diffusion Generative Models.} Diffusion Models (DMs)~\citep{ho2020denoising, song2019generative, song2020score} are generative models designed to sample from a target data distribution $p_{\text{data}}(\mathbf{x}_0)$ by reversing a predefined forward noising process~\citep{sohl2015deep}. In the forward diffusion process, a data sample $\mathbf{x}_0$ is progressively perturbed with Gaussian noise over a continuous time interval $t \in [0, T]$. This process is mathematically described as adding noise to obtain a noisy state $\mathbf{x}_t = \sqrt{\alpha_t} \mathbf{x}_0 + \sqrt{1 - \alpha_t} \boldsymbol{\epsilon}_t$, where $\boldsymbol{\epsilon}_t \sim \mathcal{N}(\mathbf{0}, \mathbf{I})$ represents standard Gaussian noise, and $\alpha_t \in [0,1]$ is a monotonically decreasing schedule controlling the noise level. DMs~\citep{ho2020denoising} train a neural network $\boldsymbol{\epsilon}_\theta:\mathcal{X} \times [T] \mapsto \mathcal{X}$ to predict the noise $\boldsymbol{\epsilon}_t$ at each time step $t$. This objective implicitly learns the \textit{score function} of the marginal distribution $p_t(\mathbf{x}_t)$~\citep{song2019generative, song2020score}:
\begin{equation} \label{Eq: ddpm}
\min_{\theta} \mathbb{E}_{\mathbf{x}_t, \boldsymbol{\epsilon}_t,t} \left[ \left\| \boldsymbol{\epsilon}_\theta(\mathbf{x}_t, t) + \sqrt{1 - \alpha_t} \underbrace{\nabla_{\mathbf{x}_t} \log p_t(\mathbf{x}_t)}_{\text{Score Function}} \right\|_2^2 \right],
\end{equation}

During inference, samples are generated by solving the reverse-time SDE from $t=T$ to $t=0$. Crucially, there exists a corresponding deterministic process known as the \textit{probability flow ODE} (PF-ODE), whose trajectories share the same marginal distributions $\{p_t(\mathbf{x}_t)\}_{t\in[0,T]}$ as the SDE~\citep{song2020score, lipman2022flow}. 

For conditional generation tasks like text-to-image synthesis, the objective of diffusion models is to learn the conditional distribution $p(\mathbf{x}_0\mid y)$ given a condition $y$. Following the score matching framework~\cite{song2019generative, song2020score}, the corresponding conditional score can be expressed as:
\begin{equation}
    \underbrace{\nabla_{\mathbf{x}_t} \log p_t(\mathbf{x}_t | y)}_{\text{Conditional Score}} = \underbrace{\nabla_{\mathbf{x}_t} \log p_t(\mathbf{x}_t)}_{\text{Unconditional Score}} + \underbrace{\nabla_{\mathbf{x}_t} \log p_t(y | \mathbf{x}_t)}_{\text{Guidance}}.
\end{equation}

\textbf{Discrete Diffusion Process.} The forward diffusion process progressively corrupts discrete data, typically toward an absorbing state (e.g., a mask token) or a high-entropy (often uniform) noise distribution over the vocabulary \cite{austin2021structured}. In principle, this process could be modeled as a time-inhomogeneous continuous-time Markov chain on a finite state space $\mathcal{X}$. The process is governed by a rate matrix $Q_t \in \mathbb{R}^{|\mathcal{X}|\times|\mathcal{X}|}$ for each time $t$, where the off-diagonal entries are non-negative and columns sum to zero. The evolution of the probability mass function (PMF) $p_t \in \mathbb{R}^{|\mathcal{X}|}$ over the states is described by the Kolmogorov forward equation:
\begin{equation}
\frac{d}{dt} p_t = Q_t p_t.
\end{equation}
The corresponding reverse process, which generates data by reversing the corruption, is also a Markov process. Its exact transition rates $\overline{Q}_t$ are usually defined using the forward rates and the \emph{concrete score} ratio \citep{meng2022concrete}, $r_t(y \mid x) = p_t(y)/p_t(x)$:
\begin{equation}
\overline{Q}_t(y,x) = Q_t(x,y)r_t(y\mid x),
\end{equation}
with the diagonal entries defined as $\overline{Q}_t(x,x) = -\sum_{y\neq x} \overline{Q}_t(y,x)$. As the true score ratio $r_t$ is intractable, in practice it is approximated with a neural network $s_\theta(x, t)[y] \approx r_t(y \mid x)$. This yields a learned, normalized reverse transition probability kernel for $y \neq x$:
\begin{equation}\label{eq:normalized_transition}
P_{t}(y \mid x) = \frac{Q_t(x,y)s_\theta(x,t)[y]}{\sum_{z\neq x}Q_t(x, z) s_\theta(x, t)[z]}.
\end{equation}
\section{Methodology}


\subsection{Unified Framework for Text Diffusion Guidance}\label{sec:unified framework}
 
Let $\mathcal{V}$ denote the vocabulary, and the base model's logits denoted as $\mathbf{z} \in \mathbb{R}^{|\mathcal{V}|}$, inducing a probability distribution $P_{\mathrm{base}}(y) = \mathrm{softmax}(\mathbf{z})_y$ over tokens $y \in \mathcal{V}$.

We define a generalized class of guided distribution $P_{\gamma}$, parameterized by a guidance signal vector $\mathbf{\psi} \in \mathbb{R}^{|\mathcal{V}|}$ and a guidance scalar $\gamma \in \mathbb{R}$, as
\begin{equation}
    P_{\gamma}(y) = \frac{1}{Z(\gamma)} P_{\mathrm{base}}(y) \cdot \exp\left(\gamma \cdot \mathbf{\psi}_y\right),
    \label{eq:general_guidance}
\end{equation}
where $Z(\gamma)$ is the partition function. This formulation encapsulates a broad class of sampling strategies. To characterize the behavior of these strategies, we analyze the local sensitivity of the Shannon entropy $H[P_{\gamma}]$ with respect to the guidance scalar $\gamma \in \mathbb{R}$.

\begin{proposition}[Entropy Dynamics of Text Diffusion Guidance]
\label{prop:entropy_grad}
The first-order change in the Shannon entropy of the guided distribution $P_{\gamma}$ at the limit of zero guidance is governed by the negative covariance between the base log-probabilities and the guidance signal:
\begin{equation}
    \left. \nabla_{\gamma} H[P_{\gamma}] \right|_{\gamma=0} = -\mathrm{Cov}_{Y \sim P_{\mathrm{base}}} \big[ \log P_{\mathrm{base}}(Y), \mathbf{\psi}_Y \big].
\end{equation}
Since $\log P_{\mathrm{base}}(y) = \mathbf{z}_y - \log Z(0)$, and covariance is translation invariant under constant shifts (such as $\log Z(0)$), the gradient simplifies to:
\begin{equation}
    \left. \nabla_{\gamma} H[P_{\gamma}] \right|_{\gamma=0} = -\mathrm{Cov}_{P_{\mathrm{base}}}[\mathbf{z}, \mathbf{\psi}].
\end{equation}
\end{proposition}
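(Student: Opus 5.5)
The plan is a direct computation using the exponential-family structure of \eqref{eq:general_guidance}. We write $\log P_\gamma(y) = \log P_{\mathrm{base}}(y) + \gamma \psi_y - \log Z(\gamma)$ with $Z(\gamma)=\sum_y P_{\mathrm{base}}(y)e^{\gamma\psi_y}$. The entropy then becomes
\begin{equation*}
H[P_\gamma] = -\mathbb{E}_{P_\gamma}[\log P_{\mathrm{base}}(Y)] - \gamma\,\mathbb{E}_{P_\gamma}[\psi_Y] + \log Z(\gamma).
\end{equation*}
Everything is a finite sum over $\mathcal{V}$, so differentiation in $\gamma$ may be exchanged with summation without any regularity issues.

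The first step is the standard log-partition identity $\frac{d}{d\gamma}\log Z(\gamma) = \mathbb{E}_{P_\gamma}[\psi_Y]$. From it we obtain the score identity $\frac{d}{d\gamma}P_\gamma(y) = P_\gamma(y)\bigl(\psi_y - \mathbb{E}_{P_\gamma}[\psi]\bigr)$. Consequently, for any fixed function $f$ we have $\frac{d}{d\gamma}\mathbb{E}_{P_\gamma}[f(Y)] = \mathrm{Cov}_{P_\gamma}[f,\psi]$.

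The second step differentiates the three terms of $H$. The first term contributes $-\mathrm{Cov}_{P_\gamma}[\log P_{\mathrm{base}},\psi]$. The second term contributes $-\mathbb{E}_{P_\gamma}[\psi] - \gamma\,\mathrm{Var}_{P_\gamma}[\psi]$. The third term contributes $+\mathbb{E}_{P_\gamma}[\psi]$. The expectation terms cancel, which gives the general formula
\begin{equation*}
\frac{d}{d\gamma}H[P_\gamma] = -\mathrm{Cov}_{P_\gamma}[\log P_{\mathrm{base}},\psi] - \gamma\,\mathrm{Var}_{P_\gamma}[\psi].
\end{equation*}
Setting $\gamma=0$ kills the variance term and makes $P_0=P_{\mathrm{base}}$, which yields the first display of Proposition~\ref{prop:entropy_grad}.

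An equivalent and cleaner route is to differentiate $-\sum_y P_\gamma(y)\log P_\gamma(y)$ directly. Using $\sum_y \frac{d}{d\gamma}P_\gamma(y)=0$, this gives $-\sum_y \frac{d}{d\gamma}P_\gamma(y)\log P_\gamma(y)$. Substituting the score identity gives $-\mathrm{Cov}_{P_\gamma}[\log P_\gamma,\psi]$, which reduces to the claim at $\gamma=0$. I would present this version, since it avoids the bookkeeping of the cancellation.

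The final step is the simplification. We have $\log P_{\mathrm{base}}(y) = \mathbf{z}_y - \log Z(0)$, where $Z(0)$ is read as the softmax normalizer $\sum_y e^{\mathbf{z}_y}$. Because covariance is invariant under adding a constant to either argument, $\mathrm{Cov}[\log P_{\mathrm{base}},\psi]=\mathrm{Cov}[\mathbf{z},\psi]$.

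There is no real obstacle here; the only care point is notational. The statement reuses $Z(0)$ for the softmax normalizer, whereas in \eqref{eq:general_guidance} $Z(0)=1$. The translation-invariance argument holds for any constant, so I would remark that the conclusion is unaffected by which normalizer is meant.
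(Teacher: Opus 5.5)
Your proposal is correct, and the version you say you would present (differentiate $-\sum_y P_\gamma(y)\log P_\gamma(y)$ directly, drop $\sum_y \frac{d}{d\gamma}P_\gamma(y)=0$, and substitute the score identity to get $-\mathrm{Cov}_{P_\gamma}[\log P_\gamma,\psi]$; then set $\gamma=0$ and use shift invariance) is exactly the paper's proof, while your three-term decomposition is a consistent variant that also exposes the $-\gamma\,\mathrm{Var}_{P_\gamma}[\psi]$ term. Your notational remark is also accurate: the paper's appendix first defines $Z(0)=\sum_{y'}e^{z_{y'}}$ and then $Z(\gamma)=\mathbb{E}_{P_{\mathrm{base}}}[e^{\gamma\psi_Y}]$, which gives $Z(0)=1$, and, as you note, the shift-invariance step works for either constant.
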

\begin{proof}
    We defer the proof to the Appendix
\end{proof}
This proposition establishes that guidance acts as an entropy reducer if and only if the signal $\mathbf{\psi}$ is positively correlated with the base model's logits $\mathbf{z}$.

\vspace{-0.2cm}
\subsection{Analysis of Existing Training-free Guidance Mechanisms}\label{sec:existing methods}

We now apply Proposition~\ref{prop:entropy_grad} to characterize the behavior of existing strategies.

\textbf{Temperature Scaling.} Temperature scaling is a commonly used sampling strategy that rescales the base logits by a temperature parameter $\tau$. For $0<\tau < 1$, standard temperature sampling is equivalent to setting the scalar strength $\gamma = \frac{1}{\tau} - 1$ and defining the guidance signal $\mathbf{\psi}$ for temperature scaling as the logits themselves, i.e., $\mathbf{\psi}_{\mathrm{temp}} = \mathbf{z}$.
Substituting this into Proposition~\ref{prop:entropy_grad} yields:
\begin{equation}
    \left. \nabla_{\gamma} H \right|_{\gamma=0} = -\mathrm{Var}_{P_{\mathrm{base}}}[\mathbf{z}] \leq 0.
\end{equation}
Since variance is strictly non-negative, temperature scaling monotonically reduces entropy. More broadly, alternative sampling strategies such as Top-$k$ and Nucleus sampling explicitly truncate the probability distribution, which inevitably constrains the diversity of generated outputs.
\begin{remark}
The behavior could be reversed by setting $\tau>1$ which leads to a negative $\gamma$. In this case, the guidance move in the opposite direction and monotonically increases entropy. This behavior inevitably increases the likelihood of all noisy tokens. We empirically demonstrate in Section~\ref{sec:numerical} that high temperature increases diversity at a significant cost of generation quality.
\end{remark}

\textbf{Discrete Classifier-Free Guidance (D-CFG).} D-CFG extrapolates between an unconditional estimate $\mathbf{z}_{\emptyset}$ and a conditional estimate $\mathbf{z}_c$. Treating the unconditional distribution as the base $P_{\mathrm{base}}$, the guidance signal is given by the residual vector $\mathbf{\psi}_{\mathrm{cfg}} = \mathbf{z}_c - \mathbf{z}_{\emptyset}$. Substituting this into Proposition~\ref{prop:entropy_grad} and exploiting the linearity of covariance, the entropy gradient decomposes into two competing terms:
\begin{equation}
    \left. \nabla_{\gamma} H \right|_{\gamma=0} =\mathrm{Var}_{P_{\emptyset}}[\mathbf{z}_{\emptyset}] - \mathrm{Cov}_{P_{\emptyset}}[\mathbf{z}_{\emptyset}, \mathbf{z}_c].
\end{equation}
D-CFG typically acts as an entropy reducer by sharpening the distribution around the mode, it increases entropy only in specific cases of misalignment between the conditional and prior models (characterized by low covariance between $\mathbf{z}_{\emptyset}$ and $\mathbf{z}_c$). 

In summary, existing control mechanisms are less suited for enhancing meaningful diversity. Although temperature scaling can increase entropy, it does so at the cost of injecting indiscriminate noise, whereas truncation methods are explicitly designed to suppress the distribution tail. Similarly, D-CFG lacks a consistent mechanism to promote variation. Furthermore, these standard methods operate exclusively in the logit space, which primarily reflects model confidence rather than semantic content. 

The sharpening behavior of standard methods, combined with the lack of semantic awareness in the logit space, fundamentally limits diversity in text generation tasks. To overcome these limitations, we propose SAKE guidance in the following subsection, a mechanism that leverages the semantic information inherent in the sequence to actively promote diverse generation. SAKE also supports bi-directional entropy modulation, which allows the model to dynamically switch between exploration and exploitation.

\subsection{Our Method: Semantic-Aware Kernel Entropy Guidance}\label{sec:diversity guidance}

In this subsection, we proceed in three steps. First, we formalize the general discrete diversity signal used for guidance. Second, we resolve the intractability of the discrete formulation via a computationally efficient embedding-space linearization. Finally, we introduce a semantic-aware diversity function that improve sequence spectral diversity by maximizing the order-2 Rényi entropy.

\subsubsection{Discrete Diversity Guidance}

\textbf{General Discrete Diversity Guidance Signal.} We first define the general discrete diversity signal for DLMs as:
\begin{equation}
    \mathbf{\psi}_\mathrm{div}(y)=\mathcal{D}\big(y; \mathbf{x}_{-i}\big)-\mathcal{D}\big(x_i; \mathbf{x}_{-i}\big)
\end{equation}
where $\mathcal{D}:\mathbb{N}^L \rightarrow \mathbb{R}$ is the discrete diversity function, $L$ is the sequence length, $i \le L$ is the token index, $\mathbf{x}_{-i}$ is the sequence except token at index $i$. In this case, $\mathbf{\psi}_\mathrm{div}(y)$ measures the diversity "gap" (or potential difference) between a proposal for the next state $y \in \mathcal{V}$ and the current state $x_i$. Therefore, a token that brings more diversity gain will be more likely generated after guidance. In combination with Proposition~\ref{prop:entropy_grad}, the sign of the covariance now becomes indefinite. Unlike static sharpening methods, diversity guidance creates a dynamic feedback loop:
\begin{itemize}[leftmargin=*]
    \item \textbf{Exploration (entropy increase):} If the base prediction $\mathbf{z}^{(i)}$ is semantically redundant (highly similar to other elements $\mathbf{z}^{(j)}$), the signal $\mathbf{\psi}_{\mathrm{div}}^{(i)}$ will oppose the direction of $\mathbf{z}^{(i)}$. This yields a negative covariance, increasing entropy and encouraging the model to explore.
    \item \textbf{Exploitation (entropy decrease):} If the base prediction is already diverse, the difference term vanishes, allowing the natural confidence of the model to dominate.
\end{itemize}

\begin{algorithm}[t]
\caption{Diversity Guidance for Discrete Text Diffusion}
\label{alg:rke_guidance}
\begin{algorithmic}[1]
\REQUIRE Pretrained DLM $f_\theta$, Prompt $\mathbf{c}$, Number of steps $T$, Guidance scale $\gamma$, Bandwidths $\sigma, \sigma_{\text{attn}}$, Token embedding matrix $E \in \mathbb{R}^{|\mathcal{V}| \times d}$, masking schedule $N(t)$.
\STATE \textbf{Initialize:} $\mathbf{x}_0 = [\mathbf{c}, \text{[M]}, \dots, \text{[M]}]$
\FOR{$t = 1$ to $T$}
    \STATE Logits $\mathbf{Z}_t, \text{Embedding } \mathbf{H}_t = f_\theta(\mathbf{x}_{t-1})$
    
    \STATE Compute semantic kernel $\mathbf{K}$ where $\mathbf{K}_{ij} = \exp(-\|\mathbf{h}_i - \mathbf{h}_j\|^2 / 2\sigma^2)$
    
    \STATE Compute attention kernel $\mathbf{K}_{\text{attn}}$ where $\mathbf{K}_{\text{attn}, ij} = \exp(-\|i - j\|^2 / 2\sigma_{\text{attn}}^2)$
    
    \STATE Initialize $G = \mathbf{0}^{d \times L}$
    \FOR{$i = 1$ to $L$}
        \STATE $\mathbf{g}_i = \sum_{j \neq i} (\mathbf{K}_{\text{attn}, ij} \cdot \mathbf{K}_{ij})^2 \cdot (\mathbf{h}_i - \mathbf{h}_j)$
        \STATE $G[:\,,\,i] = \frac{2}{\sigma^2} \mathbf{g}_i$
    \ENDFOR
    
    \STATE Guidance Signal $\mathbf{\Psi}_{\text{div}} = E \, G$
    
    \STATE $\tilde{\mathbf{Z}}_t = \mathbf{Z}_t + \gamma \cdot \mathbf{\Psi}_{\text{div}}$
    
    \STATE Sample $\mathbf{x}_t$ based on $N(t)$ and guided logits $\tilde{\mathbf{Z}}_t$
\ENDFOR
\end{algorithmic}
\end{algorithm}

However, considering the size of $\mathbf{\psi}_{\mathrm{div}}$ is $|\mathcal{V}|$, and current DLMs usually adopt a huge vocabulary size in practice. Guiding a sequence of length $L$ will lead to $L\times |\mathcal{V}|$ evaluations of the diversity function, which is unacceptable for real-time DLM inferences. We will first show how to efficiently calculate the diversity signal. 

\subsubsection{Efficient Compute via Embedding-Space Linearization.} 
To avoid the brute-force evaluation of the discrete diversity gap, we linearize the diversity function within the continuous embedding space. This allows us to replace the discrete difference with a gradient projection.

Let $\mathbf{h}_y$ and $\mathbf{h}_{x_i}$ denote the continuous embedding vectors for a candidate token $y$ and the current token $x_i$, respectively. Let $\mathbf{H}_{x_{-i}}$ be the matrix of token embeddings for a sequence except token $i$. Assuming continous diversity function $\tilde{\mathcal{D}}$ is differentiable with respect to the embedding of the $i$-th token, the diversity potential for candidate $y$ can be approximated as:
\begin{align}
    \tilde{\psi}^{(i)}_{\mathrm{div}}(y) &\triangleq \tilde{\mathcal{D}}\big(\mathbf{h}_y; \mathbf{H}_{x_{-i}}\big)-\tilde{\mathcal{D}}\big(\mathbf{h}_{x_i}; \mathbf{H}_{x_{-i}}\big)\\
    &\approx \left\langle \nabla_{\mathbf{h}_{x_i}} \tilde{\mathcal{D}}(\mathbf{h}_{x_i}; \mathbf{H}_{x_{-i}}), \, \mathbf{h}_y - \mathbf{h}_{x_i} \right\rangle, 
    \label{eq:first_order_approx}
\end{align}
since $\mathbf{h}_{x_i}^T \nabla_{\mathbf{h}_{x_i}} \tilde{\mathcal{D}}(\mathbf{h}_{x_i}; \mathbf{H}_{x_{-i}})$ is a fixed term for all $y$ and softmax is shift-invariant, we may ignore this term and the final guidance signal can be approximated as:
\begin{equation}\label{eq:div_guide_signal_approx}
    \tilde{\psi}^{(i)}_{\mathrm{div}} \approx E \, \nabla_{\mathbf{h}_{x_i}} \tilde{\mathcal{D}}(\mathbf{h}_{x_i}; \mathbf{H}_{x_{-i}}),
\end{equation}
where $E \in \mathbb{R}^{|\mathcal{V}| \times d}$ is the vocabulary embedding matrix containing all candidate tokens. This approximation reduces the computation from $L\times|\mathcal{V}|$ evaluations of discrete diversity function $\mathcal{D}$ to a single backward pass of a continuous diversity function $\tilde{\mathcal{D}}$. This formulation establishes a generic, computationally efficient framework for diversity guidance, applicable to any differentiable diversity metric $\tilde{\mathcal{D}}$ without incurring the latency costs of brute-force search.

Eq.~\ref{eq:first_order_approx} relies on a first-order Taylor expansion, assuming a smooth continuous diversity function $\tilde{D}$ and bounded embedding distances $\Vert\mathbf{h_y} - \mathbf{h_{x_i}}\Vert$. These assumptions hold in modern LLMs (like LLaDA) because LayerNorm/RMSNorm constrains embeddings, and Gaussian kernel in our $\tilde{D}$ is smooth.

A theoretical limitation of this linear approximation is that, unlike the true bounded RBF kernel, it does not plateau. Under large guidance scales ($\gamma$), an outlier token lying far along the gradient could receive an overestimated diversity score. Fortunately, this edge case is controllable. By restricting the guidance signal to a 'trust region', simply applying a mild Top-p or Top-k filter to the base logits before guidance, we ensure diversification only occurs among tokens that are semantically plausible.

\subsubsection{SAKE: Semantic-Aware Kernel Entropy Guidance}

Having established the efficient diversity guidance framework, we now instantiate the continuous objective $\tilde{\mathcal{D}}$ to specifically target semantic redundancy. We propose maximizing the Rényi entropy of the kernel Gram matrix to promote \emph{sequence} spectral diversity grounded in information-theoretic principles.

We used two types of kernel. One is the \emph{semantic} kernel, which measures the semantic similarity between tokens. The other one is the \emph{attention} kernels, which measures the weights that tokens attend to each other in sequences. In our implementation, we leverage the relative positional information of tokens to determine attention weights.
\begin{proposition}[SAKE Guidance Signal]\label{prop:sake}
Let $\mathbf{H} = [\mathbf{h}_1, \dots, \mathbf{h}_L] \in \mathbb{R}^{d \times L}$ be the matrix of token embeddings for a sequence of length $L$. Consider the Gaussian RBF kernel $\kappa(\mathbf{h}_i, \mathbf{h}_j) = \exp(- \|\mathbf{h}_i - \mathbf{h}_j\|^2 / 2\sigma^2)$ with bandwidth $\sigma > 0$, and let $\mathbf{K} \in \mathbb{R}^{L \times L}$ be the kernel Gram matrix with $\mathbf{K}_{ij} = \kappa(\mathbf{h}_i, \mathbf{h}_j)$. Similarly, $\mathbf{K}_{\text{attn},(i,j)}=\exp(- \|i - j\|^2 / 2\sigma_{\text{attn}}^2)$. 
\begin{figure*}[t] 
    \centering
    \includegraphics[width=0.95\textwidth]{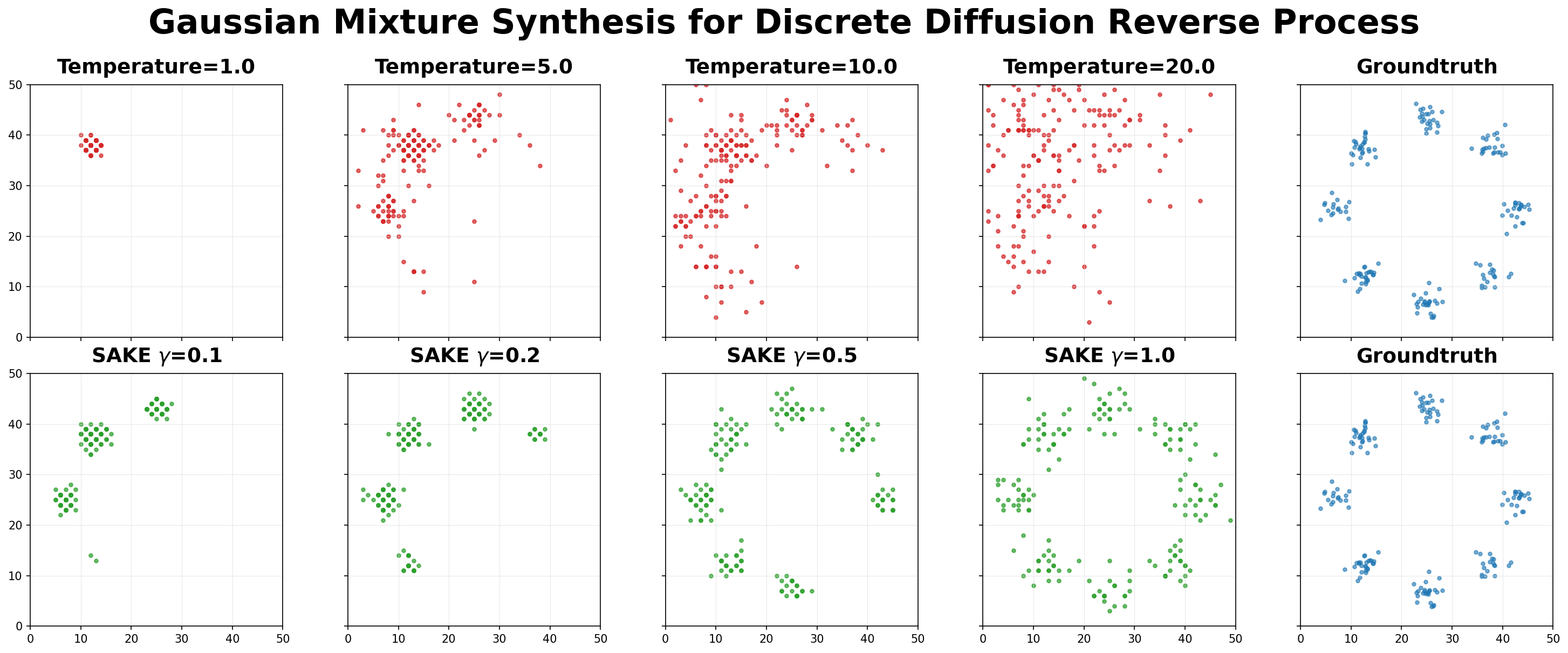}
    \caption{Comparison of sampling behaviors for a 2D 8-Gaussian-mixture target.
    \textbf{Top row}: baseline unguided reverse process at different temperatures \(T\in\{1,5,10,20\}\), illustrating a coverage-fidelity trade-off (higher \(T\) covers more modes but yields more diffuse, lower-quality samples).
    \textbf{Bottom row}: Our SAKE with guidance strengths \(\gamma\in\{0.1,0.2,0.5,1.0\}\), which improves mode coverage while maintaining better precision. Rightmost column shows ground-truth samples from the 8-Gaussian mixture.
    }
    \label{fig:syn}
\end{figure*}
The normalized Hadamard product of kernel Gram matrices serves as a valid quantum density matrix. The order-2 Rényi entropy of this representation is:
\begin{equation}
    H_2(\frac{1}{L}\mathbf{K} \odot \mathbf{K}_{\text{attn}})  = -\log\left( \frac{1}{L^2}\|\mathbf{K}\odot \mathbf{K}_{\text{attn}}\|_F^2\right).
\end{equation}
To increase $H_2$ by adjusting the $i$-th token embedding $\mathbf{h}_i$, we define the continuous diversity function:
\begin{equation}
    \tilde{\mathcal{D}}_\kappa(\mathbf{h}_i; \mathbf{H}_{-i}) \triangleq -\sum_{j\neq i} \kappa_{\text{attn}}(i, j)^2 \cdot \kappa(\mathbf{h}_i, \mathbf{h}_j)^2.
\end{equation}
where $\mathbf{H}_{-i}$ denotes the embeddings of all tokens except the $i$-th. The gradient of this function with respect to $\mathbf{h}_i$ is:
\begin{equation}
    \nabla_{\mathbf{h}_i} \tilde{\mathcal{D}}_\kappa = \frac{2}{\sigma^2} \sum_{j\neq i} \kappa_{\text{attn}}(i, j)^2 \cdot \kappa(\mathbf{h}_i, \mathbf{h}_j)^2 (\mathbf{h}_i - \mathbf{h}_j).
\end{equation}
\end{proposition}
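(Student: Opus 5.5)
I will establish the three claims of Proposition~\ref{prop:sake} in order: validity of the density matrix, the closed form of $H_2$, and the gradient formula.

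\textbf{Density matrix.} Both $\mathbf{K}$ and $\mathbf{K}_{\text{attn}}$ are Gram matrices of the Gaussian RBF kernel, which is positive definite (Bochner's theorem: its Fourier transform is a Gaussian), so both are symmetric PSD. By the Schur product theorem, $\mathbf{K}\odot\mathbf{K}_{\text{attn}}$ is also symmetric PSD. Each diagonal entry is $\kappa(\mathbf{h}_i,\mathbf{h}_i)\kappa_{\text{attn}}(i,i)=1$, so the trace is $L$. Hence $\rho\triangleq\frac1L\mathbf{K}\odot\mathbf{K}_{\text{attn}}$ is PSD with unit trace, i.e.\ a valid density matrix.

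\textbf{Closed form of $H_2$.} The order-2 Rényi entropy is $H_2(\rho)=-\log\mathrm{tr}(\rho^2)$. For symmetric $\rho$, $\mathrm{tr}(\rho^2)=\sum_{ij}\rho_{ij}^2=\|\rho\|_F^2$, which yields $-\log(\frac1{L^2}\|\mathbf{K}\odot\mathbf{K}_{\text{attn}}\|_F^2)$ directly.

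\textbf{Why $\tilde{\mathcal D}_\kappa$ is the right objective.} Expand
\[
\|\mathbf{K}\odot\mathbf{K}_{\text{attn}}\|_F^2=\sum_{a,b}\kappa_{\text{attn}}(a,b)^2\kappa(\mathbf{h}_a,\mathbf{h}_b)^2 .
\]
Diagonal terms equal $1$. Off-diagonal terms with $a,b\neq i$ are independent of $\mathbf{h}_i$. By symmetry, the terms involving $\mathbf{h}_i$ are exactly $2\sum_{j\neq i}\kappa_{\text{attn}}(i,j)^2\kappa(\mathbf{h}_i,\mathbf{h}_j)^2$. Since $-\log$ is decreasing, increasing $H_2$ as a function of $\mathbf{h}_i$ alone is equivalent to increasing $\tilde{\mathcal D}_\kappa(\mathbf{h}_i;\mathbf{H}_{-i})$. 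More precisely, $\nabla_{\mathbf{h}_i}H_2$ is a positive multiple, namely $2/\|\mathbf{K}\odot\mathbf{K}_{\text{attn}}\|_F^2$, of $\nabla_{\mathbf{h}_i}\tilde{\mathcal D}_\kappa$, so the two share the same ascent direction.

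\textbf{Gradient.} Write $\kappa(\mathbf{h}_i,\mathbf{h}_j)^2=\exp(-\|\mathbf{h}_i-\mathbf{h}_j\|^2/\sigma^2)$. Its gradient in $\mathbf{h}_i$ is $-\frac{2}{\sigma^2}(\mathbf{h}_i-\mathbf{h}_j)\kappa^2$. The attention factor does not depend on $\mathbf{h}_i$. Applying the leading minus sign in $\tilde{\mathcal D}_\kappa$ and summing over $j\neq i$ gives the stated formula.

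The main subtlety is the PSD claim, which relies on positive definiteness of the Gaussian kernel together with the Schur product theorem. The calculus in the last step is routine. I will also state explicitly that "increase $H_2$ by adjusting $\mathbf{h}_i$" refers to the coordinatewise objective obtained by dropping the terms constant in $\mathbf{h}_i$, which justifies the definition of $\tilde{\mathcal D}_\kappa$.
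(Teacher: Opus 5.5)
Your proposal is correct and follows essentially the same route as the paper: the Schur product theorem plus the unit diagonal for the density-matrix claim, and $\mathrm{tr}(\rho^2)=\|\rho\|_F^2$ for the entropy formula. It also matches the paper's split of the Frobenius norm into $\mathbf{h}_i$-independent, diagonal, and cross terms to motivate $\tilde{\mathcal D}_\kappa$, and the chain rule on $\exp(-\|\mathbf{h}_i-\mathbf{h}_j\|^2/\sigma^2)$ for the gradient; your identity $\nabla_{\mathbf{h}_i}H_2=\frac{2}{\|\mathbf{K}\odot\mathbf{K}_{\text{attn}}\|_F^2}\nabla_{\mathbf{h}_i}\tilde{\mathcal D}_\kappa$ is a slightly sharper form of the paper's ``equivalent to minimizing'' remark.
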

\begin{proof}
    We defer the proof to the Appendix
\end{proof}
The gradient $\nabla_{\mathbf{h}_i} \tilde{\mathcal{D}}_\kappa$ has an intuitive geometric interpretation as a weighted sum of repulsive forces, where each contextual token $\mathbf{h}_j$ pushes $\mathbf{h}_i$ away with strength proportional to $\kappa_{\text{attn}}(i, j)^2\kappa(\mathbf{h}_i, \mathbf{h}_j)^2$. By substituting $\nabla_{\mathbf{h}_i} \tilde{\mathcal{D}}_\kappa$ into the first-order approximation from Eq.~\ref{eq:div_guide_signal_approx}, we obtain the computationally efficient SAKE guidance signal:
\begin{equation}\label{eq:diversity_guidance_final_signal}
    \tilde{\psi}^{(i)}_{\mathrm{div}} \approx E \, \nabla_{\mathbf{h}_i} \tilde{\mathcal{D}}_\kappa = \frac{2E}{\sigma^2} \sum_{j\neq i} \kappa_{\text{attn}}(i, j)^2 \cdot \kappa(\mathbf{h}_i, \mathbf{h}_j)^2 (\mathbf{h}_i - \mathbf{h}_j),
\end{equation}
where $E \in \mathbb{R}^{|\mathcal{V}| \times d}$ is the vocabulary embedding matrix. This formulation directly connects kernel-based order-2 R\'enyi entropy maximization to practical sequence generation with efficient spectral diversity guidance, which has linear sequence length complexity.

Algorithm~\ref{alg:rke_guidance} shows the pseudocode of our proposed SAKE method. The algorithm takes as input a pretrained DLM $f_\theta$, a prompt $\mathbf{c}$, the number of diffusion steps $T$, guidance scale $\gamma$, kernel bandwidths, an embedding matrix $E$, and a masking schedule. Line 1 first initializes a sequence with the prompt followed by masked tokens. At each step $t$, the model produces logits and token embeddings in line 3, from which semantic and attention kernels are computed to capture token similarity and positional relevance in lines 4-5. Lines 7–11 then compute the gradient of the diversity function, which is projected onto the vocabulary to form the diversity guidance signal in line 12. In line 13, the signal is added to the base logits to obtain entropy-modulated logits, which are then used to sample the next sequence state according to the diffusion schedule in line 14. After $T$ steps, the procedure terminates and outputs the final sequence $x_T$.

\section{Numerical Results}\label{sec:numerical}
\subsection{Gaussian Synthesis}
\begin{figure*}[t] 
    \centering
    \includegraphics[width=0.7\textwidth]{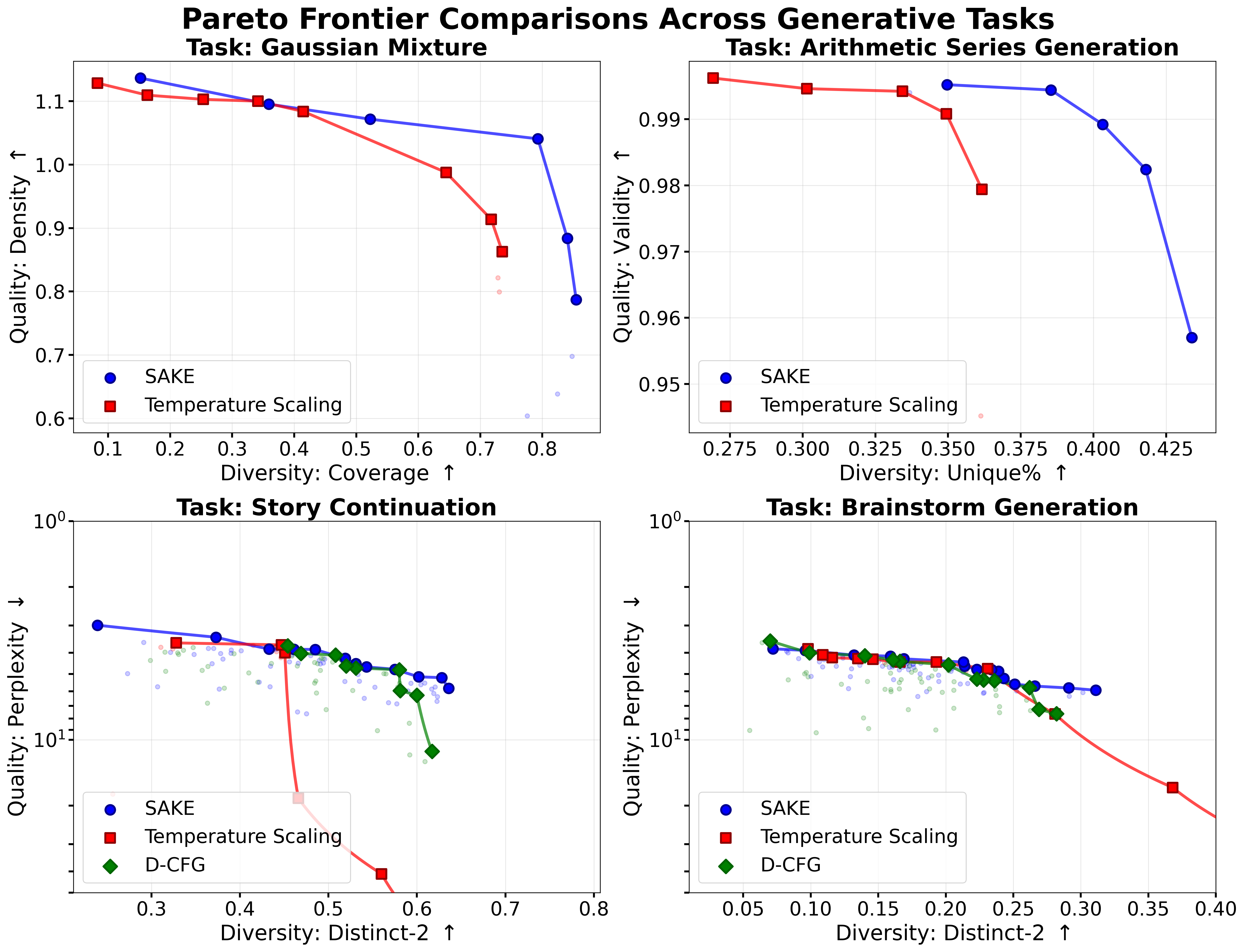}
    \caption{Pareto frontier comparisons across three generative tasks, highlighting the diversity--quality trade-off for our diversity guidance and baseline methods.
    \textbf{Top-left:} Gaussian mixture synthesis, plotting coverage vs.\ density. 
    \textbf{Top-right:} arithmetic series generation, plotting uniqueness (unique\%) vs.\ validity. 
    \textbf{Bottom:} Two tasks: story continuation and brainstorm generation, plotting distinct-2 vs.\ perplexity (log scale, lower is better). 
    In each panel, frontier markers summarize the best-achievable quality at a given diversity level. Our SAKE method (blue circles) shifts the frontier outward, improving diversity at comparable quality.}
    \label{fig:pareto_frontier}
\end{figure*}

To compare temperature scaling with our diversity guidance, we use a 2D Gaussian mixture model for precise evaluation (Figure~\ref{fig:syn}). The baseline (top) shows that increasing temperature $T$ improves mode coverage but significantly sacrifices fidelity, resulting in diffuse samples. In contrast, our method (bottom) with varying strength $\gamma$ achieves broad mode coverage while maintaining tight alignment with the target components. This confirms that our approach effectively prevents mode collapse without the quality degradation inherent to high-temperature sampling. The detailed analysis is provided in Appendix~\ref{ana_gaussian}.

\subsection{Diversity-Quality Pareto Frontier Experiments}
\vspace{-1mm}
We analyze the Diversity-Quality Pareto frontier to evaluate the trade-off between generation quality and variety. Figure~\ref{fig:pareto_frontier} demonstrates that \emph{diversity-guided} sampling consistently yields superior frontiers compared to standard \emph{temperature sampling}. While temperature scaling forces a strict trade-off—sacrificing quality for diversity—our method shifts the frontier outward, maintaining higher quality at comparable diversity levels across all tasks.

\textbf{Gaussian Mixture Generation (Fig.~\ref{fig:pareto_frontier}, top-left)}: Using Coverage (diversity) and Density (quality) metrics \cite{naeem2020reliable}, we find that increasing temperature $T$ expands coverage but causes a precipitous drop in density. Conversely, sweeping guidance strength $\gamma$ shifts the frontier outward, achieving higher coverage for equivalent density levels compared to the baseline.

\textbf{Arithmetic Series Generation (Fig.~\ref{fig:pareto_frontier}, top-right)}: In this controlled setting, we measure \emph{Validity} (correctness) against \emph{Uniqueness}. High-temperature sampling improves uniqueness only at the cost of a sharp drop in validity. Diversity guidance significantly extends the Pareto frontier, enabling high uniqueness with minimal loss in validity.

\textbf{Story Continuation (Fig.~\ref{fig:pareto_frontier}, bottom-left)}: We analyze the trade-off between Perplexity (quality) and Distinct-2 (diversity) \cite{li2016diversity} using LLaDA \cite{nie2025large}. Results indicate that aggressive temperature sampling incurs steep perplexity penalties for marginal diversity gains. Diversity guidance achieves a more favorable balance, outperforming baselines (including CFG) particularly in high-diversity regimes.

\textbf{Brainstorm Generation (Fig.\ref{fig:pareto_frontier}, bottom-right)}: This task, requiring multiple creative proposals per prompt, presents a challenge for baselines. Temperature sampling suffers rapid quality degradation at high diversity, while D-CFG remains overly conservative. In contrast, diversity guidance establishes a superior frontier, maintaining low perplexity even as Distinct-2 increases. Qualitative examples (Table\ref{tab:qualitative_example}) further confirm that while baselines collapse into repetitive syntactic templates, SAKE successfully diversifies semantic content without sacrificing coherence.

Collectively, these results suggest that SAKE is a more effective strategy for traversing the quality-diversity manifold than relying solely on stochastic temperature scaling. The detailed analysis is provided in Appendix~\ref{ana_tradeoff}.

\begin{table*}[ht]
    \centering
    \small 
    \caption{Qualitative comparison between SAKE and the baseline under identical temperature $T$ and prompt.}
    \label{tab:qualitative_example}
    
    \begin{tabular}{p{0.4\linewidth} p{0.4\linewidth}}
        \toprule
        \multicolumn{2}{l}{\textbf{Prompt:} \textit{``Write 5 prompts for generating a dog image. Be creative.''}} \\
        \midrule
        \textbf{Baseline ($T=0.2$)} & \textbf{SAKE ($T=0.2, \gamma=0.7$)} \\
        \midrule
        
        \begin{enumerate}[leftmargin=*, nosep]
            \item A dog running through a field, with a blue sky in the background.
            \item A dog sitting on a couch, with a blue sky in the background.

            \item A dog swimming in a lake, with a blue sky in the background.

            \item A dog playing in a park, with a blue sky in the background.

            \item A dog sleeping in a bed, with a blue sky in the background.

        \end{enumerate} 
        & 
        \begin{enumerate}[leftmargin=*, nosep]
            \item A golden retriever with a wag on its tail, fetching a ball in a grassy field.
            \item A white poodle with a mop of fur, walking on a leash in a city park.
            \item A black labrador with a smile on its face, playing with a toy in a living room.
            \item A brown husky with a pack on its back, running with a sled in a snowy forest.
            \item A tan beagle with a bone in its mouth, sitting on a porch in a suburban neighborhood.
        \end{enumerate} \\
        \bottomrule
    \end{tabular}
\end{table*}

\subsection{Diversity-Guided Chain-of-Thought in Complex Reasoning Tasks}
\begin{figure*}[htbp] 
    \centering
    \includegraphics[width=0.85\linewidth]{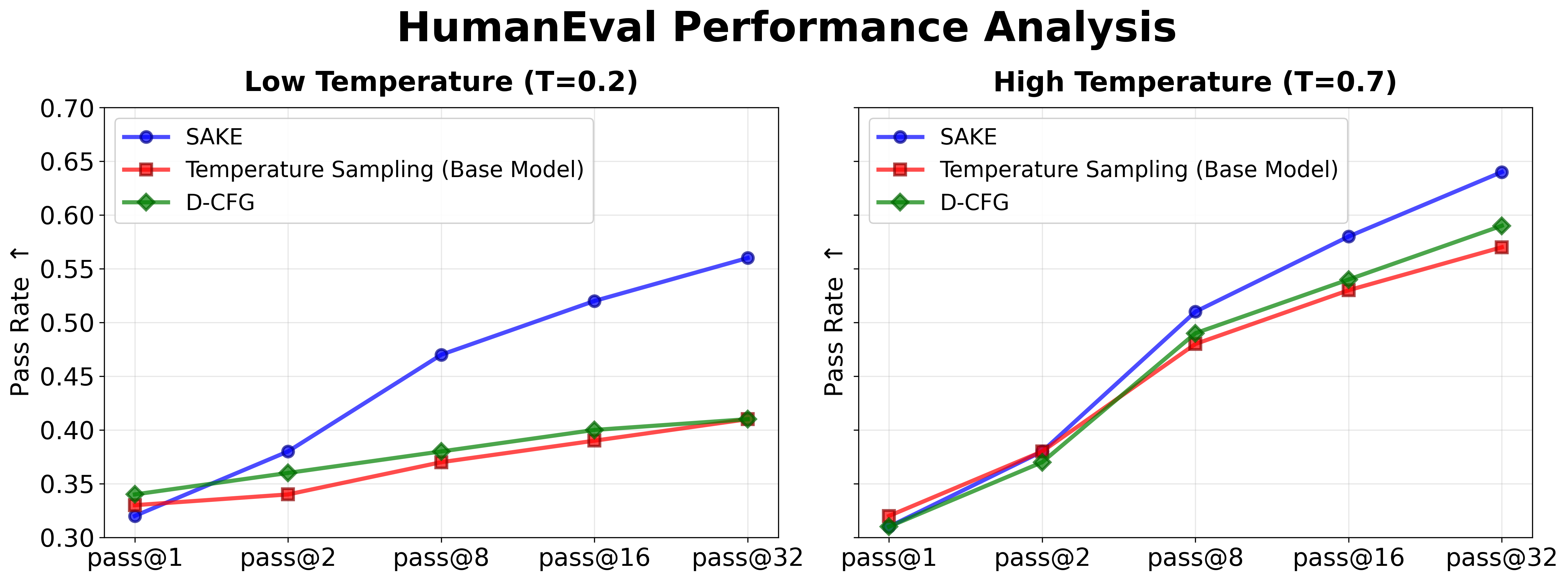}
    \caption{Pass@k rates on HumanEval at low ($T=0.2$) and high ($T=0.7$) temperatures. At $T=0.2$, standard Temperature Sampling and D-CFG exhibit signs of mode collapse, while our SAKE method shows a steady improvement from $k=1$ to $k=32$. Notably, our method at $T=0.2$ achieves a Pass@32 score comparable to the baseline at $T=0.7$, proving it can extract diverse, high-quality solutions without relying on high temperature.}
    \label{fig:humaneval_passk}
\end{figure*}

\begin{table*}[htbp]
\centering
\caption{Performance comparison on code generation and mathematical reasoning benchmarks.}
\label{tab:reasoning}
\setlength{\tabcolsep}{6pt}
\renewcommand{\arraystretch}{1.1}
\begin{tabular}{lc|c|cc}
\toprule
\textbf{Benchmark} \scriptsize{(Metric)} & \textbf{\#Shots} & \textbf{Base Model} & \textbf{D-CFG} & \textbf{SAKE} \\
\midrule
HumanEval \scriptsize{(Pass@1)}  & 0 & 32.9 & \textbf{33.7} & 32.0 \\
HumanEval \scriptsize{(Pass@32)} & 0 & 41.1 & 41.2 & \textbf{55.8} \\
MBPP \scriptsize{(Pass@1)}       & 3 & \textbf{40.2} & 37.3 & 39.0 \\
MBPP \scriptsize{(Pass@32)}      & 3 & 48.2 & 45.9 & \textbf{56.1} \\
\midrule
GSM8K SC \scriptsize{(SM)}       & 5 & 71.5 & 73.2 & \textbf{75.1} \\
GSM8K SC \scriptsize{(FE)}       & 5 & 68.8 & 69.7 & \textbf{73.5} \\
\bottomrule
\end{tabular}%
\end{table*}

\begin{table*}[htbp]
    \centering
    \caption{Generation speed (Tokens/sec) with standard deviation ($\pm \sigma$).}
    \label{tab:gen_speed_std}
    \begin{tabular}{lccccc}
        \toprule
        & \multicolumn{4}{c}{\textbf{Prompt Length}} & \\
        \cmidrule(lr){2-5}
        \textbf{Method} & \textbf{128} & \textbf{256} & \textbf{512} & \textbf{1024} & \textbf{Average} \\
        \midrule
        Base Model & $70.19 \pm 0.42$ & $56.82 \pm 0.34$ & $39.44 \pm 0.06$ & $23.19 \pm 0.02$ & $47.41$ \\
        D-CFG & $53.41 \pm 1.92$ & $38.42 \pm 0.67$ & $23.53 \pm 0.03$ & $11.98 \pm 0.01$ & $31.83$ \\
        SAKE & $64.51 \pm 3.45$ & $53.48 \pm 0.21$ & $36.84 \pm 0.14$ & $21.51 \pm 0.01$ & $44.09$ \\
        \bottomrule
    \end{tabular}
\end{table*}

Table~\ref{tab:reasoning} summarizes performance on HumanEval, MBPP, and GSM8K using LLaDA-8B. We compare the unguided base model, D-CFG, and our SAKE using standard stochastic sampling. We report Pass@32 ($n{=}32$) for code tasks and self-consistency ($n{=}5$) for GSM8K. To ensure a fair comparison, we perform a hyperparameter search for both the baseline methods and ours via grid search.

Our method boosts performance in settings requiring diverse exploration. On HumanEval, while Pass@1 remains competitive, Pass@32 improves substantially to 55.8, outperforming both the base model (41.1) and D-CFG (41.2). Similarly, on MBPP, Pass@32 rises to 56.1 (vs.\ base 48.2). This indicates our method generates higher \emph{effective} diversity, producing a candidate set with a higher probability of correctness rather than merely increasing surface variability.

Figure~\ref{fig:humaneval_passk} highlights the mitigation of mode collapse. At low temperature ($T{=}0.2$), the baseline saturates quickly ($0.33 \rightarrow 0.41$), whereas our method forces exploration, improving Pass@32 to 0.56. Notably, our method at $T{=}0.2$ matches the baseline's performance at $T{=}0.7$, demonstrating the ability to extract latent knowledge without the quality degradation risks associated with high-temperature sampling. For GSM8K (5-shot), our method achieves 75.1 (strict) and 73.5 (flexible), surpassing both the base model (71.5/68.8) and D-CFG. These results confirm that diversity guidance benefits downstream selection mechanisms (Pass@$k$, majority voting) by reallocating probability mass toward distinct, plausible reasoning paths.

\subsection{Computation Efficiency}
As shown in Eq.\ref{eq:diversity_guidance_final_signal}, the SAKE diversity signal possesses linear sequence length complexity and can be efficiently calculated. We empirically validate this theoretical efficiency in Table~\ref{tab:gen_speed_std}, which reports the generation speed  (tokens/sec, batch size equals 1) across varying prompt lengths for the LLaDA model in a single GPU.

The results demonstrate that SAKE introduces negligible inference latency. On average, our method maintains a generation speed of $44.09$ tokens/sec, retaining approximately $93\%$ of the unguided baseline's throughput ($47.41$ tokens/sec). In contrast, D-CFG suffers from a substantial computational penalty, averaging only $31.83$ tokens/sec—a reduction of nearly $33\%$ compared to the Baseline.

This efficiency gap becomes more significant at longer context windows. At a prompt length of 1024, SAKE remains highly efficient ($21.51$ tokens/sec) compared to the Baseline ($23.19$ tokens/sec). Conversely, D-CFG experiences a sharp decline to $11.98$ tokens/sec, effectively halving the generation speed of the base model. These findings confirm that SAKE provides a lightweight guidance mechanism that scales effectively to longer sequences.

\section{Limitations and Discussion}
Similar to many other training-free guidance methods, our method introduces additional inference-time computation through the evaluation of the semantic kernel and attention kernel. As with guidance-based decoding more broadly, this creates a trade-off between controllability and latency. Although the added cost is substantially smaller than that of  guidance baselines, and our embedding-space linearization keeps the complexity linear in sequence length, the method nevertheless incurs non-negligible overhead relative to unguided decoding. Empirically, the throughput reduction is modest (approximately 7\% as shown in Table~\ref{tab:gen_speed_std}), suggesting that the approach remains practical in many deployment settings. However, this overhead may still limit applicability in real-time scenarios with extremely strict latency constraints, where even small inference slowdowns can be undesirable.

A second limitation concerns the theoretical characterization of guidance-induced distribution shift. While Proposition~\ref{prop:entropy_grad} analyzes the effect of guidance on the output distribution through changes in Shannon entropy, this does not yet provide a complete account of how the guidance strength hyperparameter \(\gamma\) shapes broader properties of the decoded distribution. In particular, although \(\gamma\) can be selected effectively in practice through validation or modest hyperparameter search, a more precise mathematical relationship between guidance strength and target distribution shift remains underdeveloped. A further theoretical treatment of this relationship would improve interpretability and could help reduce empirical tuning.

\section{Conclusions}
We presented a general framework for inference-time diversity guidance in discrete diffusion language models, named \emph{Semantic-Aware Kernel Entropy} (SAKE). By maximizing Rényi entropy within the semantic space, SAKE overcomes the limitations of token-independent assumptions and static temperature scaling. Our method serves as an adaptive modulator, dynamically balancing exploration and coherence to improve algorithmic robustness. Empirical results on code and math reasoning benchmarks confirm that SAKE boosts performance by improving diversity without retraining.

\clearpage
\clearpage

\section*{Acknowledgment}
This work is partially supported by a grant from the Research Grants Council of the Hong Kong Special Administrative Region, China, Project 14210725, and is partially supported by CUHK Direct Research Grant with CUHK Project No. 4055164. The work is also supported by a grant under 1+1+1 CUHK-CUHK(SZ)-GDSTC Joint Collaboration Fund. Also, the authors would like to thank the anonymous reviewers and metareviewer for their constructive suggestions and insightful feedback.
\section*{Impact Statement}
This research contributes to the advancement of Machine Learning methodologies. While we acknowledge the broader societal implications inherent to AI technologies, we do not foresee any immediate negative consequences or specific ethical concerns arising directly from this work that necessitate distinct discussion.

\bibliography{paper}
\bibliographystyle{icml2026}

\newpage
\appendix
\onecolumn
\section{Proofs}
\subsection*{Proof of Proposition~\ref{prop:entropy_grad}}

\begin{proposition}[Entropy Dynamics of Text Diffusion Guidance]
The first-order change in the Shannon entropy of the guided distribution $P_\gamma$ at the limit of zero guidance is governed by the negative covariance between the base log-probabilities and the guidance signal:
\begin{equation}
    \left. \nabla_\gamma H[P_\gamma] \right|_{\gamma=0} = -\mathrm{Cov}_{Y \sim P_{\mathrm{base}}} [\log P_{\mathrm{base}}(Y), \psi_Y].
\end{equation}
Furthermore, since $\log P_{\mathrm{base}}(y) = z_y - \log Z(0)$, this simplifies to:
\begin{equation}
    \left. \nabla_\gamma H[P_\gamma] \right|_{\gamma=0} = -\mathrm{Cov}_{P_{\mathrm{base}}} [\mathbf{z}, \psi].
\end{equation}
\end{proposition}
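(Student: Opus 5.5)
We prove the identity by direct differentiation of the exponential-family form of Eq.~\eqref{eq:general_guidance}. Write $\log P_\gamma(y) = \log P_{\mathrm{base}}(y) + \gamma\psi_y - \log Z(\gamma)$ with $Z(\gamma)=\sum_y P_{\mathrm{base}}(y)e^{\gamma\psi_y}$. The whole argument then reduces to two standard facts about one-parameter exponential tilts.

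\textbf{Step 1: the score of the family.} Differentiating the partition function gives $Z'(\gamma)/Z(\gamma)=\mathbb{E}_{P_\gamma}[\psi_Y]$. Hence
\[
\partial_\gamma \log P_\gamma(y)=\psi_y-\mathbb{E}_{P_\gamma}[\psi]
\qquad\text{and}\qquad
\partial_\gamma P_\gamma(y)=P_\gamma(y)\bigl(\psi_y-\mathbb{E}_{P_\gamma}[\psi]\bigr).
\]

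\textbf{Step 2: differentiate the entropy.} Start from $H[P_\gamma]=-\sum_y P_\gamma(y)\log P_\gamma(y)$. Its derivative is
\[
-\sum_y \partial_\gamma P_\gamma(y)\,\log P_\gamma(y)\;-\;\sum_y \partial_\gamma P_\gamma(y).
\]
The second sum vanishes because $\sum_y P_\gamma(y)=1$ for every $\gamma$. Substituting Step 1 into the first sum gives
\[
\nabla_\gamma H[P_\gamma] = -\mathbb{E}_{P_\gamma}\bigl[\log P_\gamma(Y)\,(\psi_Y-\mathbb{E}_{P_\gamma}\psi)\bigr] = -\mathrm{Cov}_{P_\gamma}[\log P_\gamma(Y),\psi_Y].
\]
Since $\psi_Y-\mathbb{E}\psi$ is centered, this expectation is exactly a covariance, which yields the expression above. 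Setting $\gamma=0$, where $P_0=P_{\mathrm{base}}$, gives the first claim.

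\textbf{Step 3: the simplification.} We have $\log P_{\mathrm{base}}(y)=z_y-\log Z(0)$, where $Z(0)$ here denotes the softmax normalizer $\sum_y e^{z_y}$. The constant $\log Z(0)$ has zero covariance with $\psi$, so the second form follows by bilinearity of covariance.

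\textbf{Justification and main obstacle.} The only point needing care is the exchange of differentiation and summation. Because $\mathcal{V}$ is finite and every term is smooth in $\gamma$, this is immediate. The step most prone to error is the bookkeeping in Step 2, namely making sure the $-\log Z(\gamma)$ and normalization terms drop out correctly. The cleanest way to handle this is to write everything in centered form from the start.
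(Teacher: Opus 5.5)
Your proposal is correct and follows the paper's proof essentially step for step. You compute the score $\partial_\gamma \log P_\gamma(y)=\psi_y-\mathbb{E}_{P_\gamma}[\psi]$ from the log-partition derivative, differentiate the entropy so that the normalization term drops out, identify $-\mathrm{Cov}_{P_\gamma}[\log P_\gamma(Y),\psi_Y]$, evaluate at $\gamma=0$, and remove $\log Z(0)$ by shift invariance, exactly as the paper does. You also rightly specify that $Z(0)$ in the last step is the softmax normalizer $\sum_y e^{z_y}$, not the tilted partition function (which equals $1$ at $\gamma=0$); this resolves a notational clash in the paper, though the result is unaffected since either is a constant.
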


\begin{proof}
Let $P_{\mathrm{base}}(y)$ be the probability of a candidate token $y \in \mathcal{V}$ from the vocabulary $\mathcal{V}$ under the unguided base model, defined via logits $z_y$:
\[
P_{\mathrm{base}}(y) = \frac{e^{z_y}}{Z(0)},
\]
where $Z(0) = \sum_{y'\in \mathcal{V}} e^{z_{y'}}$. Note that $\log P_{\mathrm{base}}(y) = z_y - \log Z(0)$.

The guided distribution $P_\gamma$ is defined by tilting the base distribution with a guidance signal $\psi_y$ scaled by a parameter $\gamma$:
\[
P_\gamma(y) = \frac{P_{\mathrm{base}}(y) e^{\gamma \psi_y}}{Z(\gamma)},
\]
where the partition function is $Z(\gamma) = \sum_{y'\in \mathcal{V}} P_{\mathrm{base}}(y') e^{\gamma \psi_{y'}} = \mathbb{E}_{Y \sim P_{\mathrm{base}}} [ e^{\gamma \psi_Y} ]$.

The Shannon entropy of the guided distribution is:
\[
H[P_\gamma] = - \sum_y P_\gamma(y) \log P_\gamma(y).
\]

We compute the gradient of the entropy with respect to $\gamma$:
\[
\nabla_\gamma H[P_\gamma] = \nabla_\gamma \left( - \sum_y P_\gamma(y) \log P_\gamma(y) \right).
\]
Using the product rule:
\[
\nabla_\gamma H[P_\gamma] = - \sum_y \left( (\nabla_\gamma P_\gamma(y)) \log P_\gamma(y) + P_\gamma(y) \frac{\nabla_\gamma P_\gamma(y)}{P_\gamma(y)} \right).
\]
This simplifies to:
\[
\nabla_\gamma H[P_\gamma] = - \sum_y (\nabla_\gamma P_\gamma(y)) \log P_\gamma(y) - \sum_y \nabla_\gamma P_\gamma(y).
\]
Since $\sum_y P_\gamma(y) = 1$ is constant, its derivative is zero ($\sum_y \nabla_\gamma P_\gamma(y) = 0$). Thus:
\begin{equation} \label{eq:entropy_grad}
\nabla_\gamma H[P_\gamma] = - \sum_y (\nabla_\gamma P_\gamma(y)) \log P_\gamma(y).
\end{equation}

The log-probability is given by:
\[
\log P_\gamma(y) = \log P_{\mathrm{base}}(y) + \gamma \psi_y - \log Z(\gamma).
\]
Differentiating with respect to $\gamma$:
\[
\nabla_\gamma \log P_\gamma(y) = \psi_y - \nabla_\gamma \log Z(\gamma).
\]
We compute $\nabla_\gamma \log Z(\gamma)$:
\[
\nabla_\gamma \log Z(\gamma) = \frac{1}{Z(\gamma)} \nabla_\gamma \sum_{y'\in \mathcal{V}} P_{\mathrm{base}}(y') e^{\gamma \psi_{y'}} = \sum_{y'\in \mathcal{V}} \frac{P_{\mathrm{base}}(y') e^{\gamma \psi_{y'}}}{Z(\gamma)} \psi_{y'} = \mathbb{E}_{P_\gamma}[\psi].
\]
Substituting this back, we get:
\[
\nabla_\gamma \log P_\gamma(y) = \psi_y - \mathbb{E}_{P_\gamma}[\psi].
\]

Using the identity $\nabla_\gamma P = P (\nabla_\gamma \log P)$:
\[
\nabla_\gamma P_\gamma(y) = P_\gamma(y) (\psi_y - \mathbb{E}_{P_\gamma}[\psi]).
\]

Substituting the result into Equation (\ref{eq:entropy_grad}):
\begin{align*}
\nabla_\gamma H[P_\gamma] &= - \sum_y \left[ P_\gamma(y) (\psi_y - \mathbb{E}_{P_\gamma}[\psi]) \right] \log P_\gamma(y) \\
&= - \sum_y P_\gamma(y) \psi_y \log P_\gamma(y) + \mathbb{E}_{P_\gamma}[\psi] \sum_y P_\gamma(y) \log P_\gamma(y) \\
&= - \mathbb{E}_{P_\gamma} [\psi_Y \log P_\gamma(Y)] + \mathbb{E}_{P_\gamma}[\psi] \mathbb{E}_{P_\gamma}[\log P_\gamma(Y)].
\end{align*}
Using the definition of covariance, $\mathrm{Cov}[A, B] = \mathbb{E}[AB] - \mathbb{E}[A]\mathbb{E}[B]$, we obtain:
\[
\nabla_\gamma H[P_\gamma] = - \mathrm{Cov}_{Y \sim P_\gamma} [\log P_\gamma(Y), \psi_Y].
\]

Taking the limit as $\gamma \to 0$, we have $P_\gamma \to P_{\mathrm{base}}$. Thus:
\[
\left. \nabla_\gamma H[P_\gamma] \right|_{\gamma=0} = - \mathrm{Cov}_{Y \sim P_{\mathrm{base}}} [\log P_{\mathrm{base}}(Y), \psi_Y].
\]
This proves Equation (7).

Recall that $\log P_{\mathrm{base}}(y) = z_y - \log Z(0)$. Substituting this into the covariance:
\[
\mathrm{Cov}_{P_{\mathrm{base}}} [\log P_{\mathrm{base}}(Y), \psi_Y] = \mathrm{Cov}_{P_{\mathrm{base}}} [z_Y - \log Z(0), \psi_Y].
\]
Since covariance is invariant under constant shifts (i.e., $\mathrm{Cov}[X+c, Y] = \mathrm{Cov}[X, Y]$) and $\log Z(0)$ is constant with respect to $Y$:
\[
\mathrm{Cov}_{P_{\mathrm{base}}} [z_Y - \log Z(0), \psi_Y] = \mathrm{Cov}_{P_{\mathrm{base}}} [z_Y, \psi_Y].
\]
Therefore:
\[
\left. \nabla_\gamma H[P_\gamma] \right|_{\gamma=0} = - \mathrm{Cov}_{P_{\mathrm{base}}} [\mathbf{z}, \psi].
\]
This proves Equation (8).
\end{proof}

\subsection*{Proof of Proposition~\ref{prop:sake}}
\newcommand{\R}{\mathbb{R}}
\newcommand{\bH}{\mathbf{H}}
\newcommand{\bh}{\mathbf{h}}
\newcommand{\bK}{\mathbf{K}}
\newcommand{\bI}{\mathbf{I}}
\newcommand{\Tr}{\text{Tr}}
\begin{proposition}[SAKE Guidance Signal]
Let $\bH = [\bh_1, \dots, \bh_L] \in \R^{d \times L}$ be the matrix of token embeddings for a sequence of length $L$. Consider the Gaussian RBF kernel $\kappa(\bh_i, \bh_j) = \exp(- \|\bh_i - \bh_j\|^2 / 2\sigma^2)$ with bandwidth $\sigma > 0$, and let $\bK \in \R^{L \times L}$ be the kernel Gram matrix with $\bK_{ij} = \kappa(\bh_i, \bh_j)$. Similarly, $\bK_{\text{attn},(i,j)}=\exp(- \|i - j\|^2 / 2\sigma_{\text{attn}}^2)$. 

The normalized Hadamard product of kernel Gram matrices serves as a valid quantum density matrix. The order-2 Rényi entropy of this representation is:
\begin{equation}\label{eq:appendix_renyi_entropy_form}
    H_2\left(\frac{1}{L}\bK \odot \bK_{\text{attn}}\right)  = -\log\left( \frac{1}{L^2}\|\bK\odot \bK_{\text{attn}}\|_F^2\right).
\end{equation}
To increase $H_2$ by adjusting the $i$-th token embedding $\bh_i$, we define the continuous diversity function:
\begin{equation}
    \tilde{\mathcal{D}}_\kappa(\bh_i; \bH_{-i}) \triangleq -\sum_{j\neq i} \kappa_{\text{attn}}(i, j)^2 \cdot \kappa(\bh_i, \bh_j)^2.
\end{equation}
where $\bH_{-i}$ denotes the embeddings of all tokens except the $i$-th. The gradient of this function with respect to $\bh_i$ is:
\begin{equation}
    \nabla_{\bh_i} \tilde{\mathcal{D}}_\kappa = \frac{2}{\sigma^2} \sum_{j\neq i} \kappa_{\text{attn}}(i, j)^2 \cdot \kappa(\bh_i, \bh_j)^2 (\bh_i - \bh_j).
\end{equation}
\end{proposition}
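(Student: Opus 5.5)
The proof splits into three parts: validity of the density matrix, the closed form of $H_2$, and the gradient computation.

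\textbf{Step 1 (valid density matrix).} Both $\bK$ and $\bK_{\text{attn}}$ are Gram matrices of Gaussian RBF kernels, one on the embeddings $\bh_1,\dots,\bh_L\in\R^d$ and one on the scalar positions $1,\dots,L$. The Gaussian kernel is positive definite; one can see this via Bochner's theorem, since its Fourier transform is a Gaussian and hence a nonnegative measure. Therefore both Gram matrices are symmetric positive semidefinite. By the Schur product theorem, $\bK\odot\bK_{\text{attn}}$ is then symmetric PSD. Its diagonal entries are $\kappa(\bh_i,\bh_i)\,\kappa_{\text{attn}}(i,i)=1$, so $\Tr(\bK\odot\bK_{\text{attn}})=L$. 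Hence $\rho:=\frac1L\bK\odot\bK_{\text{attn}}$ is PSD with unit trace, i.e.\ a valid density matrix.

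\textbf{Step 2 (closed form of $H_2$).} The quantum order-2 Rényi entropy is $H_2(\rho)=-\log\Tr(\rho^2)$. Since $\rho$ is real symmetric, $\Tr(\rho^2)=\Tr(\rho^\top\rho)=\|\rho\|_F^2=\frac1{L^2}\|\bK\odot\bK_{\text{attn}}\|_F^2$, which gives \eqref{eq:appendix_renyi_entropy_form}.

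\textbf{Step 3 (isolating $\bh_i$).} Expand the Frobenius norm as
\[
\|\bK\odot\bK_{\text{attn}}\|_F^2=\sum_{j,k}\kappa_{\text{attn}}(j,k)^2\,\kappa(\bh_j,\bh_k)^2 .
\]
Split the sum into terms that involve index $i$ and terms that do not:
\begin{itemize}
\item the diagonal term $(i,i)$ equals $1$;
\item the off-diagonal terms $(i,j)$ and $(j,i)$ with $j\neq i$ are equal by symmetry and contribute $2\sum_{j\neq i}\kappa_{\text{attn}}(i,j)^2\kappa(\bh_i,\bh_j)^2$;
\item all remaining terms are independent of $\bh_i$.
\end{itemize}
Therefore, as a function of $\bh_i$,
\[
\|\bK\odot\bK_{\text{attn}}\|_F^2 = C_{-i} - 2\tilde{\mathcal D}_\kappa(\bh_i;\bH_{-i}),
\]
where $C_{-i}$ does not depend on $\bh_i$. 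Because $-\log(\cdot)$ is strictly decreasing, increasing $H_2$ through $\bh_i$ is equivalent to increasing $\tilde{\mathcal D}_\kappa$. This justifies $\tilde{\mathcal D}_\kappa$ as the per-token diversity function.

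\textbf{Step 4 (gradient).} Use $\kappa(\bh_i,\bh_j)^2=\exp(-\|\bh_i-\bh_j\|^2/\sigma^2)$. Differentiating gives
\[
\nabla_{\bh_i}\kappa(\bh_i,\bh_j)^2=-\tfrac{2}{\sigma^2}\,\kappa(\bh_i,\bh_j)^2(\bh_i-\bh_j).
\]
The factor $\kappa_{\text{attn}}(i,j)$ depends only on positions, so it is constant in $\bh_i$. Summing over $j\neq i$ and applying the leading minus sign in $\tilde{\mathcal D}_\kappa$ yields the stated formula with a positive prefactor $\frac{2}{\sigma^2}$.

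The only nontrivial ingredient is the positive semidefiniteness in Step 1, which requires the Schur product theorem together with the positive definiteness of the Gaussian kernel. The rest is bookkeeping. The points to check carefully are the factor $2$ from the symmetric pairs in Step 3 (it does not affect the gradient direction) and the double sign flip in Step 4.
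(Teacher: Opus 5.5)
Your proposal is correct and follows essentially the same route as the paper. The steps match one for one: the Schur product theorem plus unit diagonal for the density matrix, $\mathrm{Tr}(\rho^2)=\|\rho\|_F^2$ for the entropy, the same diagonal/cross-term/remainder split of the Frobenius norm, and the same chain-rule gradient. Your explicit identity $\|\mathbf{K}\odot\mathbf{K}_{\text{attn}}\|_F^2 = C_{-i} - 2\tilde{\mathcal{D}}_\kappa$ states the link between raising $H_2$ through $\mathbf{h}_i$ and raising $\tilde{\mathcal{D}}_\kappa$ a bit more precisely than the paper's wording, and your Bochner justification of the Gaussian kernel's positive definiteness adds a detail the paper leaves implicit, but the argument is the same.
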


We verify the proposition in three parts: (1) the validity of the density matrix, (2) the derivation of the Rényi entropy, and (3) the gradient of the diversity function.

Let $\mathbf{M} = \bK \odot \bK_{\text{attn}}$. We claim that $\rho = \frac{1}{L}\mathbf{M}$ is a valid density matrix. A matrix $\rho$ is a density matrix if it is Hermitian, Positive Semi-Definite (PSD), and has unit trace.

\begin{enumerate}
    \item \textbf{Hermitian Property:}
    Both $\bK$ and $\bK_{\text{attn}}$ are real symmetric matrices (since the Gaussian kernel $\kappa(x,y) = \kappa(y,x)$ is symmetric). The Hadamard product of two symmetric matrices is symmetric. Thus, $\rho$ is real and symmetric, which implies it is Hermitian ($\rho = \rho^\dagger$).

    \item \textbf{Positive Semi-Definite (PSD):}
    $\bK$ is a Gram matrix generated by a valid kernel (Gaussian RBF), so $\bK \succeq 0$. Similarly, $\bK_{\text{attn}}$ is a Gram matrix generated by a Gaussian kernel over indices, so $\bK_{\text{attn}} \succeq 0$.
    By the \textbf{Schur Product Theorem}, the Hadamard product of two PSD matrices is also PSD. Therefore, $\mathbf{M} \succeq 0$ and $\rho \succeq 0$.

    \item \textbf{Unit Trace:}
    The diagonal elements of a Gaussian kernel Gram matrix are all 1, because $\kappa(\mathbf{x}, \mathbf{x}) = \exp(0) = 1$.
    Thus, $\mathbf{M}_{ii} = \bK_{ii} \cdot (\bK_{\text{attn}})_{ii} = 1 \cdot 1 = 1$.
    The trace is:
    \[
    \Tr(\rho) = \Tr\left(\frac{1}{L}\mathbf{M}\right) = \frac{1}{L} \sum_{i=1}^L \mathbf{M}_{ii} = \frac{1}{L} \cdot L = 1.
    \]
\end{enumerate}
Therefore, $\rho = \frac{1}{L}\bK \odot \bK_{\text{attn}}$ is a valid density matrix.

The definition of Order-2 Rényi entropy for a density matrix $\rho$ is:
\[
H_2(\rho) = -\log(\Tr(\rho^2)).
\]
Substituting $\rho = \frac{1}{L}\mathbf{M}$:
\[
\Tr(\rho^2) = \Tr\left( \left(\frac{1}{L}\mathbf{M}\right)^2 \right) = \frac{1}{L^2} \Tr(\mathbf{M}^2).
\]
Since $\mathbf{M}$ is real and symmetric, $\mathbf{M}^\top = \mathbf{M}$. The trace of the square of a symmetric matrix is the sum of the squares of its elements, which is the squared Frobenius norm:
\[
\Tr(\mathbf{M}^2) = \Tr(\mathbf{M}^\top \mathbf{M}) = \|\mathbf{M}\|_F^2.
\]
Substituting this back into the entropy equation:
\[
H_2(\rho) = -\log\left( \frac{1}{L^2} \|\mathbf{M}\|_F^2 \right) = -\log\left( \frac{1}{L^2} \|\bK \odot \bK_{\text{attn}}\|_F^2 \right).
\]
This matches Equation~\ref{eq:appendix_renyi_entropy_form} in the proposition.

Maximizing the entropy $H_2(\rho)$ is equivalent to minimizing the argument of the logarithm, specifically the squared Frobenius norm $\|\bK \odot \bK_{\text{attn}}\|_F^2$. This norm represents the collision probability or "purity" of the state.

To derive a tractable diversity signal for guiding the $i$-th token, we decompose the Frobenius norm of the Hadamard product matrix $\mathbf{M} = \bK \odot \bK_{\text{attn}}$. The elements are $\mathbf{M}_{jk} = \kappa(\bh_j, \bh_k) \cdot \kappa_{\text{attn}}(j, k)$.

\begin{align}
    \|\mathbf{M}\|_F^2 &= \sum_{j=1}^L \sum_{k=1}^L \left( \kappa_{\text{attn}}(j, k) \cdot \kappa(\bh_j, \bh_k) \right)^2 \\
    &= \underbrace{\sum_{j\neq i} \sum_{k\neq i} \mathbf{M}_{jk}^2}_{\text{independent of } \bh_i} 
    + \underbrace{\mathbf{M}_{ii}^2}_{\text{diagonal term}} 
    + \underbrace{2\sum_{j\neq i} \mathbf{M}_{ij}^2}_{\text{cross terms}}.
\end{align}

We observe that the diagonal term $\mathbf{M}_{ii} = \kappa(\bh_i, \bh_i)\kappa_{\text{attn}}(i,i) = 1 \cdot 1 = 1$, so $\mathbf{M}_{ii}^2 = 1$, which is constant.
The cross terms involve the interaction between token $i$ and all other tokens $j$:
\[
2\sum_{j\neq i} \mathbf{M}_{ij}^2 = 2\sum_{j\neq i} \left( \kappa_{\text{attn}}(i, j) \cdot \kappa(\bh_i, \bh_j) \right)^2 = 2\sum_{j\neq i} \kappa_{\text{attn}}(i, j)^2 \cdot \kappa(\bh_i, \bh_j)^2.
\]
Minimizing $\|\mathbf{M}\|_F^2$ with respect to $\bh_i$ is therefore equivalent to minimizing the sum of these squared cross-terms. This motivates the definition of the continuous diversity function $\tilde{\mathcal{D}}_\kappa$, which we seek to \textit{maximize} (due to the negative sign):
\[
\tilde{\mathcal{D}}_\kappa(\bh_i; \bH_{-i}) \triangleq -\sum_{j\neq i} \kappa_{\text{attn}}(i, j)^2 \cdot \kappa(\bh_i, \bh_j)^2.
\]

We compute the gradient of the function defined in Equation (2):
\[
\tilde{\mathcal{D}}_\kappa(\bh_i) = -\sum_{j\neq i} C_{ij} \cdot \kappa(\bh_i, \bh_j)^2,
\]
where $C_{ij} = \kappa_{\text{attn}}(i, j)^2$. 

First, recall the definition of the kernel $\kappa(\bh_i, \bh_j) = \exp\left( -\frac{\|\bh_i - \bh_j\|^2}{2\sigma^2} \right)$. The squared kernel is:
\[
\kappa(\bh_i, \bh_j)^2 = \exp\left( -\frac{\|\bh_i - \bh_j\|^2}{\sigma^2} \right).
\]
Applying the chain rule to find the gradient with respect to $\bh_i$:
\begin{align*}
\nabla_{\bh_i} \left( \kappa(\bh_i, \bh_j)^2 \right) &= \nabla_{\bh_i} \exp\left( -\frac{\|\bh_i - \bh_j\|^2}{\sigma^2} \right) \\
&= \exp\left( -\frac{\|\bh_i - \bh_j\|^2}{\sigma^2} \right) \cdot \nabla_{\bh_i} \left( -\frac{\|\bh_i - \bh_j\|^2}{\sigma^2} \right) \\
&= \kappa(\bh_i, \bh_j)^2 \cdot \left( -\frac{1}{\sigma^2} \cdot 2(\bh_i - \bh_j) \right) \\
&= -\frac{2}{\sigma^2} \kappa(\bh_i, \bh_j)^2 (\bh_i - \bh_j).
\end{align*}
Substituting this into the gradient of the diversity function:
\begin{align*}
\nabla_{\bh_i} \tilde{\mathcal{D}}_\kappa &= -\sum_{j\neq i} C_{ij} \cdot \nabla_{\bh_i} \left( \kappa(\bh_i, \bh_j)^2 \right) \\
&= -\sum_{j\neq i} C_{ij} \cdot \left( -\frac{2}{\sigma^2} \kappa(\bh_i, \bh_j)^2 (\bh_i - \bh_j) \right) \\
&= \frac{2}{\sigma^2} \sum_{j\neq i} \kappa_{\text{attn}}(i, j)^2 \cdot \kappa(\bh_i, \bh_j)^2 (\bh_i - \bh_j).
\end{align*}

\hfill \qedsymbol


\section{Detailed Analysis of Gaussian Synthesis}\label{ana_gaussian}
To demonstrate the distinct behaviors of temperature scaling versus our proposed diversity guidance, we first examine a controlled environment with a known ground truth. Specifically, we use a simple discrete diffusion model to approximate a 2D Gaussian mixture distribution, a setting where high-probability modes are explicitly defined. This allows for a precise evaluation of generation quality and diversity based on mode coverage. 

Figure~\ref{fig:syn} illustrates the synthesis of a 8-component Gaussian mixture target. The top row presents an unguided baseline where the key hyperparameter is the sampling temperature \(T\in\{1,5,10,20\}\), which modulates the stochasticity of the reverse dynamics. Low \(T\) tends to yield overly concentrated samples with limited mode coverage, while higher \(T\) increases exploration and covers more mixture components. However, this improved coverage comes at a clear cost in sample quality. When \(T\) is large, the particle clouds become substantially more diffuse and less well-aligned with individual Gaussian components, indicating degraded fidelity despite broader support. The bottom row applies our proposed diversity guidance term with strength \(\gamma\in\{0.1,0.2,0.5,1.0\}\), showing progressively improved mode coverage as \(\gamma\) increases. Particles distribute across the mixture components more evenly and visually better match the target support. These results suggest that while temperature scaling forces a trade-off between concentration and exploration, our proposed method effectively regularizes against mode collapse, enabling broad coverage without relying on high-temperature sampling.

\section{Detailed Analysis of Diversity-Quality Trade-off}\label{ana_tradeoff}
We extend our evaluation to broader settings by analyzing the Diversity-Quality Pareto frontier. The Pareto frontier reveals the optimal trade-off boundary between two competing objectives. In this context, an outward shift of the frontier indicates that a method achieves higher diversity for a given quality budget (or vice versa).

Figure~\ref{fig:pareto_frontier} demonstrates that \emph{diversity-guided} sampling consistently yields superior Pareto frontiers compared to standard \emph{temperature sampling} across three distinct generative tasks. In all settings, we observe that while temperature sampling forces a strict trade-off that sacrificing quality for diversity, diversity guidance mitigates this degradation, maintaining higher sample quality at comparable levels of diversity.

\textbf{Gaussian Mixture Generation (Fig.~\ref{fig:pareto_frontier}, top-left)}: We evaluate the trade-off between Coverage (diversity metric) and Density (quality metric) proposed by \citet{naeem2020reliable}, computed using $k{=}5$ nearest neighbors and a bandwidth of $\sigma{=}3$ over 2,000 samples. The generation process involves 500 steps with 2,000 particles initialized from a single point. While varying temperature $T \in [1, 30]$ expands coverage, it causes a precipitous decline in density. Conversely, sweeping the guidance strength $\gamma \in [0.01, 2.0]$ shifts the frontier outward, achieving higher coverage for equivalent density levels.

\textbf{Arithmetic Series Generation (Fig.~\ref{fig:pareto_frontier}, top-right)}: We select this task as a controlled setting where quality and diversity metrics are mathematically precise and unambiguous. The objective is to generate number sequences that follow a strict arithmetic progression. We measure \emph{Validity} (the fraction of generated sequences satisfying the arithmetic constraint) against \emph{Uniqueness} (the fraction of unique sequences among valid samples). Results are averaged over 5 runs of 1,000 samples each. We observe that high-temperature sampling improves uniqueness only at the cost of a sharp drop in validity. In contrast, diversity guidance significantly extends the Pareto frontier, allowing for high uniqueness with minimal loss in validity.

\textbf{Story Continuation (Fig.~\ref{fig:pareto_frontier}, bottom-left)}: We analyze the trade-off between Perplexity (quality metric; lower is better) and Distinct-2 (diversity metric) proposed by \citet{li2016diversity}. We employ a discrete diffusion language model LLaDA \cite{nie2025large} for generation and Qwen3-4B \cite{yang2025qwen3} as the external evaluator. The results indicate that aggressive temperature sampling incurs a steep perplexity penalty for marginal gains in diversity. Diversity guidance attains a more favorable balance, achieving higher Distinct-2 scores at lower perplexity levels. While Classifier-Free Guidance (CFG) offers a competitive baseline in specific regimes, diversity guidance maintains robust performance in high-diversity ranges.

\textbf{Brainstorm Generation (Fig.\ref{fig:pareto_frontier}, bottom-right)}: We design this experiment to require the model to generate multiple proposals for a single prompt within one generation. For example, ``\textit{Generate 5 prompts for an image of dog. Be creative.}'' We use the same generative model and evaluator as in the story continuation task. We observe that this task presents a challenging trade-off for baseline methods. As shown by the red curve, temperature sampling suffers from a rapid degradation in quality (increasing perplexity) as it attempts to increase diversity beyond a Distinct-2 score of 0.25. Similarly, D-CFG (green diamonds) remains clustered in a conservative region, failing to explore the high-diversity distinctness required for brainstorming. In contrast, diversity guidance establishes a superior Pareto frontier in the high-diversity region, maintaining low perplexity (high quality) even as the Distinct-2 metric increases. The method effectively decouples diversity from significant quality loss, allowing the model to generate varied, creative ideas without sacrificing semantic coherence. A case study comparing the baseline and our SAKE method under the same temperature setting further illustrates this behavior. As shown in Table\ref{tab:qualitative_example}, the baseline collapses into repetitive syntactic templates, recycling identical phrases across items. Conversely, SAKE successfully diversifies the semantic content, varying subjects (breeds), actions, and settings while maintaining coherence.

\section{Additional Numerical Results}

\paragraph{LLaDA on HumanEval with Nucleus Sampling (top-p).} 
We further examine the effect of nucleus sampling on code-generation performance by sweeping \textit{top-p} values for the base LLaDA model on HumanEval. Table~\ref{tab:llada_topp_humaneval} reports the corresponding \texttt{pass@32} results. Performance improves substantially as \textit{top-p} increases from 0.5 to 0.7, rising from 35.4 to 40.9. Beyond this range, performance remains relatively stable, with values between 39.6 and 41.2 across \textit{top-p} settings from 0.8 to 1.0. The best result is obtained at \textit{top-p} = 0.95, which achieves a \texttt{pass@32} of 41.2. This observation suggests that moderate-to-high sampling diversity is beneficial for HumanEval, while overly restrictive sampling degrades performance. Overall, these results support the use of the standard HumanEval setting of \textit{top-p} = 0.95 in our experiments.

\paragraph{Hyperparameter sensitivity.}
We further examine the sensitivity of our method to the two main bandwidth hyperparameters: the semantic bandwidth \(\sigma\) and the attention bandwidth \(\sigma_{\text{attn}}\). Overall, the results indicate that performance is stable across broad hyperparameter ranges, suggesting that the method does not require fine-grained tuning.

\paragraph{Semantic bandwidth \(\sigma\).}
The semantic bandwidth \(\sigma\) determines the similarity scale at which differences between token representations are treated as meaningful. Smaller values of \(\sigma\) make the method more sensitive to local semantic variation, while larger values impose a broader notion of diversity. Intuitively, smaller \(\sigma\) values may be better suited for discouraging near-duplicate generations or repetitive loops, whereas larger values can encourage diversity at the level of broader semantic themes.

Tables~\ref{tab:sigma_mdlm} and \ref{tab:sigma_bd3lm} report the sensitivity of generation perplexity (Gen PPL) to \(\sigma\) for MDLM \cite{sahoo2024simple} and BD3LM \cite{arriola2025blockdiffusioninterpolatingautoregressive}, respectively. For MDLM, the best performance is achieved around \(\sigma=20\), but results remain close to optimal throughout the range \([5, 30]\). For BD3LM, the lowest Gen PPL is obtained near \(\sigma=80\), with similarly stable performance over the broader interval \([40, 120]\). In both cases, performance degrades only when \(\sigma\) becomes substantially larger than these ranges. These findings suggest that the semantic bandwidth is robust to moderate misspecification, and that selecting \(\sigma\) within a broad reasonable interval is sufficient in practice.

\begin{table}[t]
\centering
\small
\caption{Sensitivity to semantic bandwidth \(\sigma\) for MDLM. Lower Gen PPL is better.}
\label{tab:sigma_mdlm}
\begin{tabular}{lccccccc}
\toprule
\textbf{MDLM (\(\sigma\))} & 5 & 10 & 20 & 30 & 40 & 80 & 120 \\
\midrule
\textbf{Gen PPL} \(\downarrow\) & 10.69 \(\pm\) 1.69 & 10.43 \(\pm\) 1.17 & 10.13 \(\pm\) 0.81 & 10.78 \(\pm\) 1.03 & 11.94 \(\pm\) 1.02 & 16.53 \(\pm\) 1.39 & 18.33 \(\pm\) 2.08 \\
\bottomrule
\end{tabular}
\end{table}

\begin{table}[t]
\centering
\small
\caption{Sensitivity to semantic bandwidth \(\sigma\) for BD3LM. Lower Gen PPL is better.}
\label{tab:sigma_bd3lm}
\begin{tabular}{lccccccc}
\toprule
\textbf{BD3LM (\(\sigma\))} & 40 & 60 & 80 & 100 & 120 & 200 & 300 \\
\midrule
\textbf{Gen PPL} \(\downarrow\) & 43.69 \(\pm\) 2.69 & 42.43 \(\pm\) 2.17 & 42.14 \(\pm\) 1.39 & 43.13 \(\pm\) 2.31 & 44.39 \(\pm\) 2.42 & 51.65 \(\pm\) 3.65 & 57.11 \(\pm\) 2.18 \\
\bottomrule
\end{tabular}
\end{table}

\paragraph{Attention bandwidth \(\sigma_{\text{attn}}\).}
The attention bandwidth \(\sigma_{\text{attn}}\) controls the effective positional receptive field by modulating how quickly attention decays with token distance. Larger values permit stronger influence from more distant tokens, whereas smaller values concentrate attention more locally.

A useful interpretation of \(\sigma_{\text{attn}}\) can be obtained by defining a small threshold \(\epsilon\) at which attention is considered to have effectively vanished. For token distance \(d\), this yields
\begin{equation}
\exp\left(-\frac{d^2}{2\sigma_{\text{attn}}^2}\right) = \epsilon
\quad \Longrightarrow \quad
d = \sigma_{\text{attn}} \sqrt{2\ln\left(\frac{1}{\epsilon}\right)}.
\label{eq:attn_vanishing_distance}
\end{equation}
For example, when \(\epsilon = 0.01\), the corresponding vanishing distance is approximately \(d_v \approx 3.03\,\sigma_{\text{attn}}\). This provides a simple initialization heuristic: one may choose \(\sigma_{\text{attn}}\) by specifying a desired effective context range, thereby substantially reducing the need for exhaustive tuning.

Tables~\ref{tab:sigma_attn_bd3lm} and \ref{tab:sigma_attn_mdlm} summarize the empirical sensitivity of Gen PPL to \(\sigma_{\text{attn}}\). For BD3LM, performance improves substantially as \(\sigma_{\text{attn}}\) increases from very small values, and remains near-optimal across a broad range from 10 to 80. For MDLM, the best values are obtained around \(\sigma_{\text{attn}}=20\) to 40, but performance remains stable over a much wider interval extending up to 300. Taken together, these results indicate that \(\sigma_{\text{attn}}\) is also not highly sensitive, and that the vanishing-distance heuristic provides a practical and interpretable guideline for setting this parameter.

\begin{table}[t]
\centering
\small
\caption{Sensitivity to attention bandwidth \(\sigma_{\text{attn}}\) for BD3LM. Lower Gen PPL is better.}
\label{tab:sigma_attn_bd3lm}
\begin{tabular}{lccccccc}
\toprule
\textbf{BD3LM (\(\sigma_{\text{attn}}\))} & 1 & 2 & 5 & 10 & 20 & 40 & 80 \\
\midrule
\textbf{Gen PPL} \(\downarrow\) & 52.44 \(\pm\) 1.83 & 47.17 \(\pm\) 1.84 & 43.73 \(\pm\) 1.43 & 42.35 \(\pm\) 1.90 & 42.14 \(\pm\) 1.39 & 43.57 \(\pm\) 1.71 & 42.52 \(\pm\) 1.98 \\
\bottomrule
\end{tabular}
\end{table}

\begin{table}[t]
\centering
\small
\caption{Sensitivity to attention bandwidth \(\sigma_{\text{attn}}\) for MDLM. Lower Gen PPL is better.}
\label{tab:sigma_attn_mdlm}
\begin{tabular}{lccccccc}
\toprule
\textbf{MDLM (\(\sigma_{\text{attn}}\))} & 10 & 20 & 40 & 80 & 120 & 200 & 300 \\
\midrule
\textbf{Gen PPL} \(\downarrow\) & 11.10 \(\pm\) 1.48 & 10.13 \(\pm\) 0.81 & 10.02 \(\pm\) 1.89 & 10.77 \(\pm\) 0.98 & 11.05 \(\pm\) 1.15 & 11.24 \(\pm\) 0.31 & 11.37 \(\pm\) 1.13 \\
\bottomrule
\end{tabular}
\end{table}

\begin{table}[t]
\centering
\caption{Effect of nucleus sampling (\textit{top-p}) on HumanEval for the base LLaDA model.}
\label{tab:llada_topp_humaneval}
\begin{tabular}{lccccccc}
\toprule
\textbf{LLaDA (\textit{top-p})} & 0.5 & 0.6 & 0.7 & 0.8 & 0.9 & 0.95 & 1.0 \\
\midrule
\textbf{HumanEval pass@32} & 35.4 & 36.6 & 40.9 & 40.9 & 40.2 & \textbf{41.2} & 39.6 \\
\bottomrule
\end{tabular}
\end{table}

\begin{figure*}[h!] 
    \centering
    \includegraphics[width=1.0\textwidth]{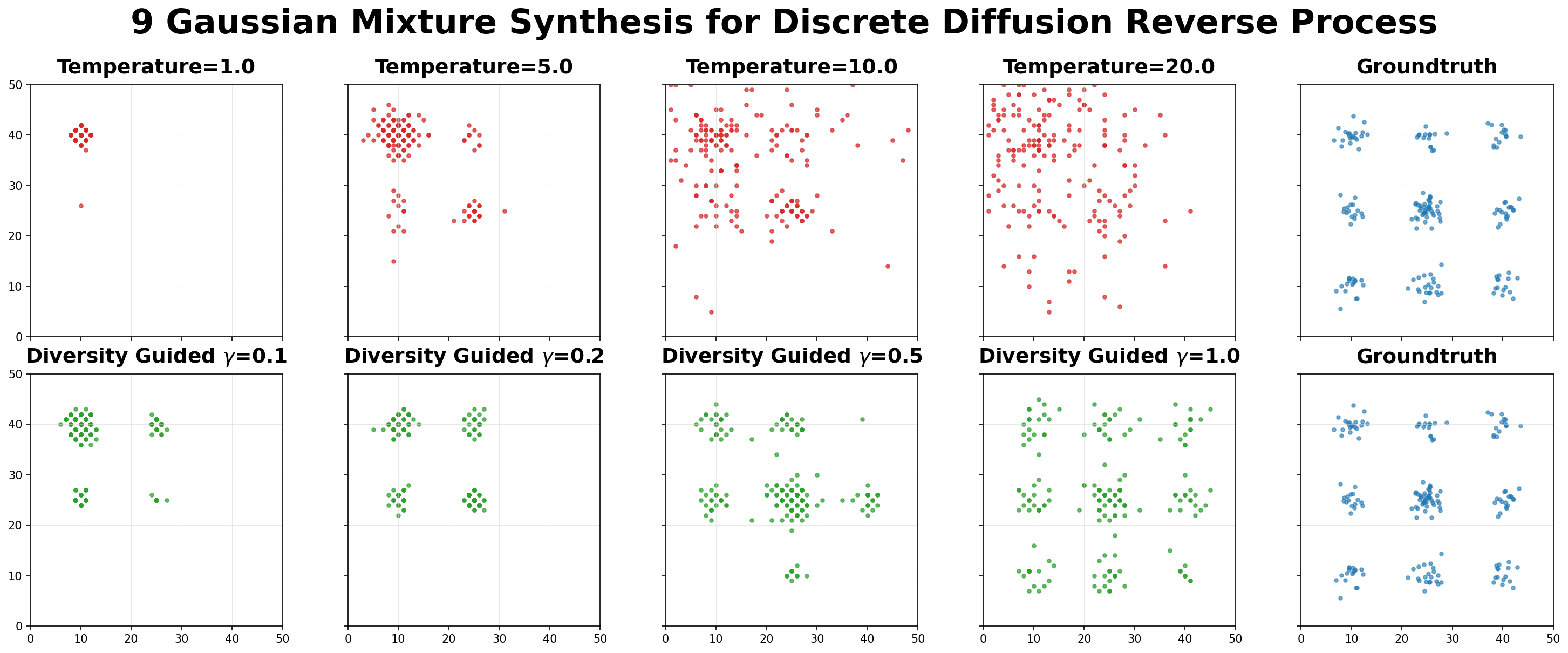}
    \caption{Comparison of sampling behaviors for a 2D 9-Gaussian-mixture target.
    \textbf{Top row}: baseline unguided reverse process at different temperatures \(T\in\{1,5,10,20\}\), illustrating a coverage-fidelity trade-off (higher \(T\) covers more modes but yields more diffuse, lower-quality samples).
    \textbf{Bottom row}: Diversity guidance with strengths \(\gamma\in\{0.1,0.2,0.5,1.0\}\), which improves mode coverage while maintaining better precision. Rightmost column shows ground-truth samples.
    }
    \label{fig:syn}
\end{figure*}

\begin{table*}[h!]
\centering
\caption{Comparison of pass@k results across HumanEval and MBPP benchmarks, with varying temperatures.}
\label{tab:multibenchmark_temp_comparison}
\begin{tabular}{llcccccccc}
\toprule
\textbf{Benchmark} & \textbf{Model} & \textbf{Temp.} & \textbf{pass@1} & \textbf{pass@2} & \textbf{pass@8} & \textbf{pass@16} & \textbf{pass@32}  \\
\midrule

\multirow{4}{*}{HumanEval} & baseline    & 0.2 & 0.33 & 0.34      & 0.37 & 0.39 & 0.41  \\
                            & D-CFG  & 0.2 & 0.34 & 0.36      & 0.38 & 0.40 & 0.41  \\
                           & SAKE  & 0.2 & 0.32 & 0.38      & 0.47 & 0.52 & 0.56  \\
                           & baseline    & 0.5 & 0.33    & 0.37        & 0.46    & 0.50    & 0.53        \\
                           & D-CFG  & 0.5 & 0.32 & 0.38      & 0.47 & 0.51 & 0.55  \\
                           & SAKE    & 0.5 & 0.32    & 0.38      & 0.49    & 0.54    & 0.59        \\
                           & baseline    & 0.7 & 0.32    & 0.38      & 0.48    & 0.53    & 0.57        \\
                           & D-CFG    & 0.7 & 0.31    & 0.37       & 0.49    & 0.54    & 0.59       \\
                           & SAKE    & 0.7 & 0.31    & 0.38       & 0.51    & 0.58    & 0.64       \\
\midrule

\multirow{2}{*}{MBPP} & baseline      & 0.2 & 0.40 & 0.42  & 0.46 & 0.47 & 0.48  \\
& D-CFG    & 0.2 & 0.37 & 0.41  & 0.43 & 0.44 & 0.46  \\
                      & SAKE    & 0.2 & 0.39 & 0.44  & 0.51 & 0.54 & 0.56  \\
\bottomrule
\end{tabular}
\end{table*}

\end{document}